\newtheorem{theorem}{Theorem}
\newcommand{\argmax}{\operatorname*{arg\,max}}
\def\@maketitle{%
  \newpage
  \null
  \vspace*{-40pt} 
  \begin{center}
    \footnotesize\itshape
    This work has been submitted to the IEEE for possible publication.\\ Copyright may be transferred without notice, after which this version may no longer be accessible.
  \end{center}
  \vspace{12pt} 
  \begin{center}%
    {\LARGE \bfseries \@title \par}%
    \vskip 1.5em%
    {\large
      \lineskip .5em%
      \begin{tabular}[t]{c}%
        \@author
      \end{tabular}\par}%
  \end{center}}
\begin{document} 

\title{Embedded Mean Field Reinforcement Learning for Perimeter-defense Game}

\author{Li Wang, Xin Yu, Xuxin Lv, Gangzheng Ai, Wenjun Wu
\thanks{Manuscript received May 20, 2025; revised xxx. This work was supported by the National Science and Technology Major Project (No. 2022ZD0117402). (Corresponding author: Wenjun Wu.)}
\thanks{Li Wang, Xuxin Lv, Wenjun Wu are with School of Artificial Intelligence, Beihang University, Beijing, China, and also with Hangzhou International Innovation Institute, Beihang University, Hangzhou, China (e-mail: wangli\_42@buaa.edu.cn, lvxuxin@buaa.edu.cn, wwj09315@buaa.edu.cn).}
\thanks{Xin Yu is with School of Computer Science and Engineering, Beihang University, Beijing, China (e-mail: nlsdeyuxin@buaa.edu.cn).}
\thanks{Gangzheng Ai is with the School of Astronautics, Beihang University, Beijing, China (e-mail: 1328738293@qq.com).}
\thanks{
\textit{This work has been submitted to the IEEE for possible publication. 
Copyright may be transferred without notice, after which this version may no longer be accessible.}
}
}


\maketitle

\begin{abstract}
With the rapid advancement of unmanned aerial vehicles (UAVs) and missile technologies, perimeter-defense game between attackers and defenders for the protection of critical regions have become increasingly complex and strategically significant across a wide range of domains. However, existing studies predominantly focus on small-scale, simplified two-dimensional scenarios, often overlooking realistic environmental perturbations, motion dynamics, and inherent heterogeneity—factors that pose substantial challenges to real-world applicability. To bridge this gap, we investigate large-scale heterogeneous perimeter-defense game in a three-dimensional setting, incorporating realistic elements such as motion dynamics and wind fields. We derive the Nash equilibrium strategies for both attackers and defenders, characterize the victory regions, and validate our theoretical findings through extensive simulations. To tackle large-scale heterogeneous control challenges in defense strategies, we propose an Embedded Mean-Field Actor-Critic (EMFAC) framework. EMFAC leverages representation learning to enable high-level action aggregation in a mean-field manner, supporting scalable coordination among defenders. Furthermore, we introduce a lightweight agent-level attention mechanism based on reward representation, which selectively filters observations and mean-field information to enhance decision-making efficiency and accelerate convergence in large-scale tasks. Extensive simulations across varying scales demonstrate the effectiveness and adaptability of EMFAC, which outperforms established baselines in both convergence speed and overall performance. To further validate practicality, we test EMFAC in small-scale real-world experiments and conduct detailed analyses, offering deeper insights into the framework’s effectiveness in complex scenarios.
\end{abstract}

\begin{IEEEkeywords}
Perimeter-defense Game,  Large-scale Heterogeneous, Mean Field Reinforcement Learning, Agent-level Attention, Representation Learning.
\end{IEEEkeywords}

\section{Introduction}
\IEEEPARstart{P}{erimeter} game\cite{ref1, ref2} represent a specialized class of pursuit-evasion problems. As illustrated in Fig.\ref{fig:game_setup}, the environment is characterized by arbitrarily shaped boundaries, such as hemispherical boundaries. In this setting, attackers seek to reach the boundary, while defenders are tasked with intercepting and preventing them from doing so. In practical scenarios, such boundaries often correspond to critical regions that require protection, such as military bases, command centers, naval vessels, or key infrastructure facilities. As an abstract theoretical problem, perimeter-defense game have broad applicability to a variety of real-world domains, including strategic facility defense, missile interception systems\cite{ref3, ref4}, UAV swarm engagements~\cite{ref5}, and patrolling tasks~\cite{ref6}, all of which are of significant relevance in real-world security and defense operations.

\begin{figure}[!t]
    \centering
    \includegraphics[width=2.5in, height=5cm]{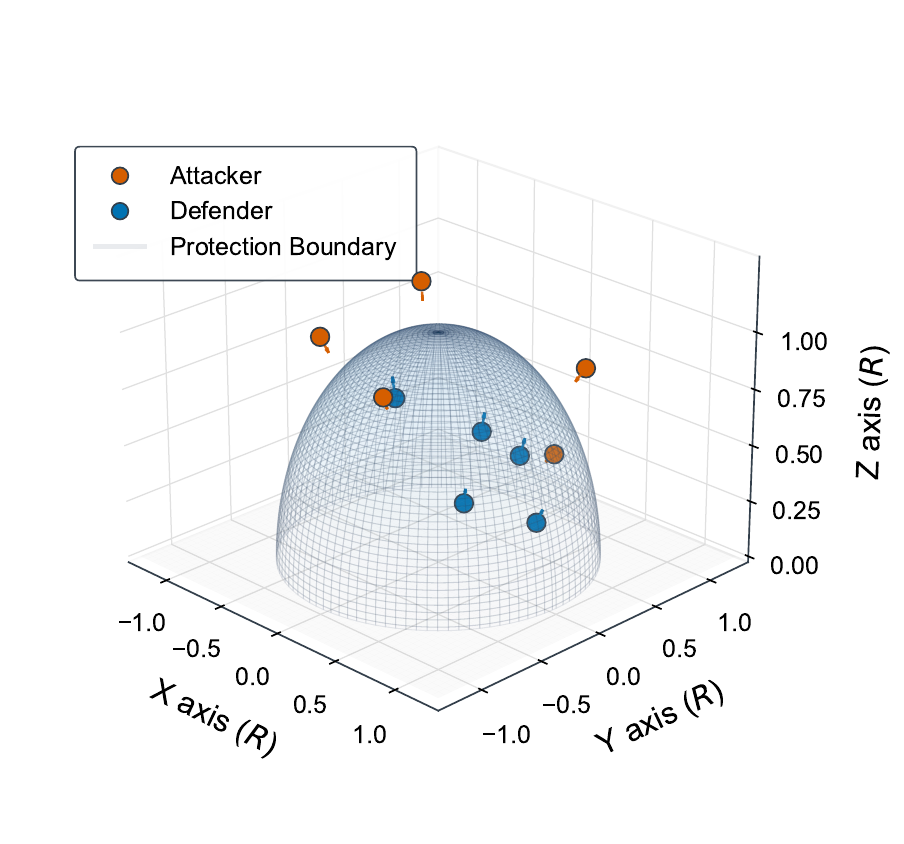}
    \caption{Hemispherical defense scenario with kinematic agents. Attacker $A$ (red) attempts to breach perimeter $\partial\mathcal{H}_R$ while defender $D$ (blue) intercepts.}
    \label{fig:game_setup}
\end{figure}

Various scenarios have been explored in the literature, offering insights into different engagement strategies and motion dynamics \cite{ref1} in perimeter-defense game. \IEEEpubidadjcol Note that we use the term motion dynamics refers specifically to the agents’ control models, such as velocity-controlled or acceleration-controlled dynamics in this paper. Early research mainly focused on global observations scenarios, Bajaj et al. \cite{ref7} investigated a one-dimensional scenario, while Shishika and Kumar \cite{ref2, ref8} extended the analysis to two-dimensional settings and proposed strategy synthesis methods based on problem decomposition and structural property analysis. In multi-agent scenarios, optimal strategies were derived based on task allocation mechanisms \cite{ref11}, and were further generalized to arbitrary convex boundaries \cite{ref12} and conical perimeter \cite{ref15}. As a further extension to three-dimensional scenarios, many studies have been conducted under settings where attackers are confined to the ground and defenders are restricted to the perimeter. Lee et al. \cite{ref9, ref10} analyzed Nash equilibrium strategies for both attackers and defenders, their subsequent work \cite{ref13} addressed the critical challenge posed by limited local observations, proposing a control framework that integrates graph neural networks with communication mechanisms and was substantiated through real-world experiments \cite{ref14}. Additionally, Adler et al. \cite{ref16} examined the influence of agent heterogeneity on overall system performance, focusing on defenders with varying speed profiles. Despite these advancements, existing research remains largely confined to either two-dimensional settings, or imposes significant constraints on the motion space and fails to model the complexities of real-world dynamics, including intricate motion characteristics of agents, variations in control parameters \cite{ref17, ref18}, and heterogeneity in their underlying dynamics \cite{ref19, ref20}. Additionally, environmental perturbations \cite{ref21, ref22} are often overlooked. The enhanced maneuverability of UAVs introduces unprecedented challenges to traditional control methods in perimeter-defense tasks, demanding more agile and adaptive strategies. Meanwhile, the complex interactions inherent in large-scale multi-agent systems significantly exacerbate the difficulty of real-time decision-making. 

To address the intricate challenges posed by complex perimeter-defense game such as environmental perturbations, heterogeneous motion dynamics, and large-scale interactions, multi-agent reinforcement learning (MARL) offers a viable and scalable solution. MARL has demonstrated remarkable progress across various domains, including multi-robot coordination \cite{ref23}, autonomous driving \cite{ref24} and pursuit-evasion scenarios \cite{ref125, ref126, ref127}, proving its effectiveness in handling complex interaction and control problems. A number of studies in MARL have been devoted to addressing the challenges of large-scale and heterogeneous agent systems—challenges that are also relevant in perimeter-defense game. To address the heterogeneity, existing Heterogeneous-Agent (HA) algorithms such as HADDPG and HATD3 \cite{ref25, ref26} leverage the multi-agent advantage decomposition theorem and sequential decision-making to ensure monotonic improvement while imposing fewer constraints on agent composition. To address the challenges of large-scale multi-agent systems—such as the exponential growth of observation and action spaces and increased interaction complexity—Yang et al. \cite{ref27} proposed the Mean-Field Reinforcement Learning (MFRL) framework. By approximating the influence of neighboring agents through mean-field action aggregation, MFRL significantly reduces the complexity of the action space. It has demonstrated strong scalability in domains such as resource allocation \cite{ref28, ref29} and UAV control \cite{ref30}. In more complex large-scale heterogeneous scenarios, Subramanian et al. \cite{ref31} proposed the Multi-Type Mean-Field Reinforcement Learning (MT-MFRL) framework, which extends MFRL by categorizing agents into distinct types and applying mean-field aggregation within each category. Yu \cite{ref32} introduced Hierarchical MFRL for large-scale multi-agent systems, employing multi-level mean-field techniques for improved scalability. Furthermore, Wu et al. \cite{ref33} proposed a reward attribution decomposition approach that incorporates attention mechanisms \cite{ref34} to refine Q-value aggregation within the mean-field framework. Despite these advancements, existing mean-field approaches struggle with action space explosion as agent heterogeneity increases. Moreover, in continuous control tasks, simple action aggregation, even with attention mechanisms, may fail to accurately capture intricate agent interactions, limiting control precision. To extracting meaningful features from high-dimensional features, representation learning \cite{ref35} offers a promising approach for handling large-scale heterogeneous systems. By leveraging state \cite{ref36, ref37, ref38}, action \cite{ref39, ref40}, and reward representations \cite{ref41, ref42}, representation learning enhances the modeling of complex agent interactions and improves learning efficiency. However, its potential in large-scale MARL has yet to be fully realized. 

In this work, we investigate large-scale heterogeneous perimeter-defense game that incorporate realistic factors, including intricate heterogeneous motion dynamics and stochastic environmental perturbations. To address the challenges inherent in such scenarios, we derive Nash equilibrium strategies and propose an embedded mean-field reinforcement learning framework that effectively handles diverse dynamics and incorporates agent-level attention to enhance decision-making in complex interactions. Specifically, we introduce a high-level action learning module that learns high-level action representations through state prediction. To manage the exponentially growing observation space and intricate agent interactions in large-scale systems while ensuring real-time inference and scalability, we propose a lightweight, agent-level attention module based on reward representation. Our approach maintains the original agent architecture, ensuring consistent inference time with minimal additional training overhead. In summary, our key contributions are as follows:
\begin{itemize}
\item \textbf{Formulation of large-scale heterogeneous perimeter-defense defense}: We develop a high-fidelity engagement model that incorporates realistic motion dynamics and wind perturbations, reducing simplifying assumptions to enhance real-world applicability.

\item \textbf{Derivation of Nash equilibrium strategies}:
We derive Nash equilibrium strategies for both attackers and defenders, providing a rigorous theoretical foundation for analyzing large-scale strategic interactions.

\item \textbf{Embedded mean field reinforcement learning}: To handle large-scale heterogeneous tasks, we introduce a novel framework that utilizes representation learning to extract high-level action embeddings for mean-field aggregation. Additionally, agent-level attention mechanisms enable agents to prioritize critical information in dynamic multi-agent scenarios, improving decision-making efficiency with low overhead.

\item \textbf{Comprehensive validation across simulation and real-world experiments}: We conduct extensive simulations to verify the Nash equilibrium strategies and analyze the characterization of winning region. Besides, our framework consistently outperforms baselines across varying scales. To bridge the gap between simulation and reality, we further validate our approach through real-world UAV experiments.
\end{itemize}

The remainder of this article is organized as follows. Section \ref{sec:derivate_of_nash} introduces the derivation and properties of the Nash equilibrium in the three-dimensional perimeter-defense game, serving as the theoretical foundation for MARL-based solutions. Section \ref{sec:marl_for_pg} describes the system model for applying MARL to solve the large-scale heterogeneous perimeter-defense game. Section \ref{sec:emfac_method} presents the proposed EMFAC method in detail. Section \ref{sec:experiment} reports both simulation and real-world experimental results. Finally, Section \ref{sec:conclusion} concludes the paper.

\section{Derivation of Nash Equilibrium Strategies}
\label{sec:derivate_of_nash}
As illustrated in Fig.~\ref{fig:game_setup}, we consider a large-scale heterogeneous perimeter-defense game with an equal number of attackers and defenders, played along a hemispherical boundary. Unlike previous studies \cite{ref9}, both attackers and defenders are allowed to maneuver freely in three-dimensional space in our discussion. Our formulation further incorporates realistic motion dynamics and wind field perturbations, with a focus on enabling defenders to learn effective interception strategies. To tackle this challenging problem, we first analyze a simplified one-on-one case in this section, deriving the Nash equilibrium strategies for both attacker and defender. Building on the strategies, we extend our approach to the more complex many-on-many setting and leverage MARL to control the defenders effectively in section \ref{sec:marl_for_pg}.
\subsection{Problem Assumptions}
 We first analyze a simplified one-on-one case without perturbations. Specifically, we analyze the interaction between two agents operating on a hemispherical surface of radius $R$, denoted as $\partial\mathcal{H}_R$ (see Table~\ref{tab:notation} for key parameters). Both agents follow first-order kinematics: the defender (\( D \)) starts inside \( \mathcal{H}_R \) and moves with unit speed, aiming to intercept the attacker (\( A \)) before it reaches the boundary \( \partial\mathcal{H}_R \); the attacker, in contrast, begins outside \( \mathcal{H}_R \) and travels at a speed \( v \leq 1 \), seeking to reach any point \( Z_B \) on the hemispherical boundary. The attacker selects a point \( B \in \partial\mathcal{H}_R \) as the breach point, which is observable by the defender. The game terminates when either:
\begin{itemize}
    \item Successful Breach: 
    \[
    \inf_{z\in\partial\mathcal{H}_R}\|Z_A(T) - z\| < \delta \quad \land \quad \inf_{t\in[0,T]}\|Z_A(t)-Z_D(t)\| \geq \epsilon,
    \]
    
    \item Effective Interception:
    \[
    \exists t \leq T: \|Z_A(t)-Z_D(t)\| < \epsilon,
    \]
\end{itemize}
where $\delta$ defines perimeter-defense proximity tolerance.

\begin{table}[!t]
\caption{Key Notation and Parameters}
\label{tab:notation}
\centering
\begin{tabular}{@{}lll@{}}
\toprule
Symbol & Definition & Domain/Constraints \\ 
\midrule
$\psi$ & Azimuth (yaw angle) & $[0, 2\pi)$ \\
$\phi$ & Polar (pitch angle) & $[0, \pi/2]$ \\
$r$& Radial distance from origin & $r \geq 0$ \\
$Z_A$ & Attacker state & $(\psi_A, \phi_A, r_A)$\\
$Z_D$ & Defender state & $(\psi_D, \phi_D, r_D)$\\
$Z_B$ & Breach point & $(\psi_B, \phi_B, R) \in \partial\mathcal{H}_R$ \\
$\|.\|$ &Euclidian distance& $\|.\| \geq 0$ \\
$ETA$ & Estimated Time of Arrival & $ETA \geq 0$\\
$\tau_A$ & Attacker ETA & $\|Z_A - Z_B\|/v$ \\
$\tau_D$ & Defender ETA & $\|Z_D - Z_B\|$ \\
$\epsilon$ & Capture radius & $\epsilon \in (0, 0.2R)$ \\
\bottomrule
\end{tabular}
\end{table}

\subsection{Formal Definition}
Follow \cite{ref10}, the temporal payoff function is defined as:
\begin{equation}
P(Z_D,Z_A,Z_B) = \tau_D - \tau_A = \|Z_D - Z_B\| - \frac{1}{v}\|Z_A - Z_B\|.
\end{equation}
Outcomes are determined by:
\[
\text{sgn}(P) = 
\begin{cases} 
1 & \text{(Attacker wins if } P > 0) \\
-1 & \text{(Defender wins if } P < 0).
\end{cases}
\]
The defender minimizes $P$ through anticipatory interception strategies, while the attacker maximizes $P$ via optimal breach point selection under the constraint $Z_B\in\partial\mathcal{H}_R$.

\begin{theorem}[Nash Equilibrium Strategy]
\label{thm:nash}
The following pair of strategies forms a Nash equilibrium in perimeter-defense game:
\begin{itemize}
    \item \textbf{Attacker's Strategy}: The attacker selects the optimal breach point \(B^*\) and follows the direct linear trajectory \(\overline{AB^*}\) at maximum speed \(v\), where \(B^*\) maximizes the temporal advantage:  
  \[
  B^* = \argmax_{B \in \partial\mathcal{H}_R} P(B) = \|DB\| - \tfrac{1}{v}\|AB\|,
  \]  

 \item \textbf{Defender's Strategy}: The defender intercepts along the direct linear trajectory \(\overline{DB^*}\) at maximum speed 1.

This strategy pair ensures neither player can unilaterally deviate to improve their payoff.
\end{itemize}
\end{theorem}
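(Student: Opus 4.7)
The plan is to verify the two best-response conditions that define a Nash equilibrium under the temporal payoff $P = \tau_D - \tau_A$: no unilateral deviation by the attacker strictly increases $P$, and no unilateral deviation by the defender strictly decreases $P$, given the other player follows the stated strategy. The backbone of the argument is that, because $P$ depends on each player only through their straight-line travel time to the announced breach point $B$, any trajectory that is not a maximum-speed Euclidean straight line to a chosen target is strictly dominated by its straight-line counterpart via the triangle inequality. I would therefore first reduce each player's optimization to a choice of target point, and then compare payoffs using the definition of $B^\ast$.

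First I would fix the defender's strategy (straight line to $B^\ast$ at unit speed) and analyze the attacker's best response. If the attacker instead declares a different breach point $B' \in \partial\mathcal{H}_R$, the defender, who by assumption observes the declared breach point, redirects along $\overline{DB'}$, so the realized payoff is $P(B')$; by the very definition $B^\ast = \argmax_{B \in \partial\mathcal{H}_R} P(B)$, we have $P(B') \leq P(B^\ast)$. If the attacker keeps $B^\ast$ but flies a curved or slower trajectory, its arrival time at $B^\ast$ strictly exceeds $\|Z_A - B^\ast\|/v$, which only decreases $P$. Hence no attacker deviation is profitable.

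Next I would fix the attacker's strategy (straight line to $B^\ast$ at speed $v$). Then $\tau_A = \|Z_A - B^\ast\|/v$ is a constant, so minimizing $P$ is equivalent to minimizing the defender's arrival time at $B^\ast$, which the direct line $\overline{DB^\ast}$ at unit speed achieves. The subtle subcase is a defender who abandons $B^\ast$ in favor of intercepting the attacker at some interior point of the segment $\overline{AB^\ast}$. I would rule this out by invoking the Apollonius-sphere characterization for the speed ratio $v$: the set of points on $\overline{AB^\ast}$ reachable by the defender no later than the attacker is non-empty if and only if $\|Z_D - B^\ast\| \leq \|Z_A - B^\ast\|/v$, precisely when the direct-to-$B^\ast$ strategy already guarantees interception. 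Any such early interception either reproduces the outcome of the direct strategy (when it succeeds) or is geometrically infeasible (when it fails).

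I expect the main obstacle to be this last subcase, since it requires ruling out every alternative interception scheme rather than just comparing two straight-line payoffs. I would discharge it with a standard Apollonius-sphere lemma stating that, for a faster pursuer ($v \leq 1$), the locus of equal arrival times on any straight evader trajectory is either empty or forms a closed interval containing the point where the two ETAs coincide, whose far endpoint is $B^\ast$ exactly when the defender's direct-to-$B^\ast$ strategy is feasible. With this lemma, every mid-path interception plan collapses into either redundancy with the stated defender strategy or outright infeasibility, completing the verification of the Nash property.
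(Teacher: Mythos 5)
Your plan is correct and follows essentially the same route as the paper: the attacker part is the same argmax-plus-straight-line argument, and your Apollonius-type lemma for the defender is exactly what the paper proves directly by rescaling $A$ to a point $A'$ with $\|A'B^*\| = \tfrac{1}{v}\|AB^*\|$ (normalizing speeds) and showing that a profitable deviation to a mid-path point $C$ would force $\|DB^*\| > \|DC\| + \|B^*C\|$, contradicting the triangle inequality. The only difference is packaging: you defer the key step to a ``standard'' lemma, whereas the paper writes out the one-line triangle-inequality computation that proves it.
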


\begin{proof}
\textbf{Part 1: Attacker Optimality}. Fix $\sigma_D^*$. For any $B \neq B^*$, by $B^*$'s definition we have:
\begin{equation}
    P(B^*) \geq P(B).
\end{equation}
The attacker's linear trajectory $\overline{AB^*}$ at maximum speed $v$ minimizes the travel time $\tau_A$, as any alternative path would result in a longer distance and thus a longer travel time.

\textbf{Part 2: Defender's Optimal Strategy}. Fix the attacker's strategy of targeting $B^*$. To analyze potential deviations, we construct an equivalent geometric representation by scaling the attacker's initial position $A$ to a new point $A'$ such that
\begin{equation}
\|A'B^*\| = \frac{1}{v}\|AB^*\|,
\end{equation}
thereby normalizing the attacker’s speed to 1, which matches that of the defender. 
For any defender deviation to a point \( C \in \overline{AB^*} \), the payoff difference satisfies:
\begin{align}
\|DC\| - \|A'C\| &< \|DB^*\| - \|A'B^*\|, \label{eq:payoff_diff} \\
\intertext{which can be rewritten as:}
\|DC\| - \|A'C\| &< \|DB^*\| - \|A'C\| - \|B^*C\|. \label{eq:payoff_diff_simplified}
\end{align}
It implies that any deviation to \( C \) would require:
\begin{align}
\|DB^*\| > \|B^*C\| + \|DC\|. \label{eq:triangle_inequality}
\end{align}
Thus, no profitable deviation exists since (\ref{eq:triangle_inequality}) violates the triangle inequality. \qedhere
\end{proof}

\begin{figure}[!t]
\centering
\begin{subfigure}[b]{1.7in}
    \includegraphics[width=\linewidth]{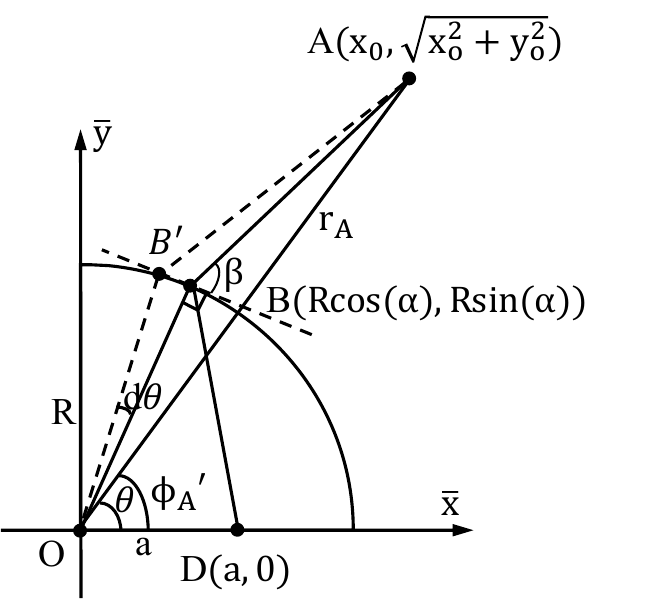}
    \caption{}
    \label{fig:2d}
\end{subfigure}
\hfil
\begin{subfigure}[b]{1.7in}
    \includegraphics[width=\linewidth]{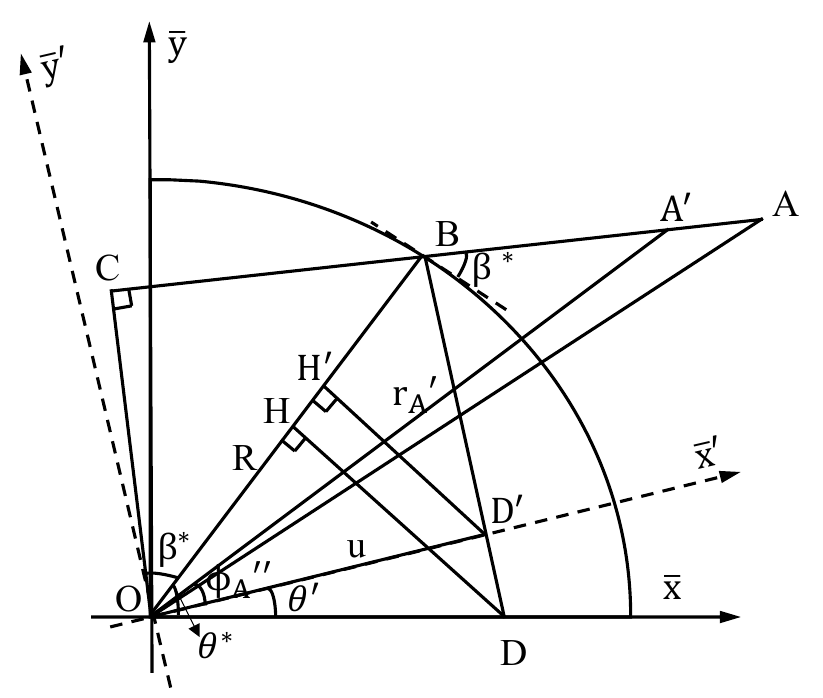}
    \caption{}
    \label{similar}
\end{subfigure}
\caption{(a) The Transformed 2D Plane. (b) Proportional distances maintained through similarity.}
\label{nash_verify}
\end{figure}

\subsection{Nash Equilibrium Solution}
By aligning the coordinate system such that defender position $D$ lies on the $x$-$z$ plane ($y=0$), we define the positions:
\[
B(x,y,z) \in \partial\mathcal{H}_R,\quad D(a,0,b),\quad A(x_0,y_0,z_0).
\]
The payoff function becomes:
\begin{multline}
P(x,y,z) = \underbrace{\sqrt{(x-a)^2 + y^2 + (z-b)^2}}_{d_D} \\
- \frac{1}{v}\underbrace{\sqrt{(x-x_0)^2 + (y-y_0)^2 + (z-z_0)^2}}_{d_A}.
\label{eq:payoff}
\end{multline}
Constrained optimization is formulated as:
\begin{equation}
\max_{x,y,z} P(x,y,z) \quad \text{s.t.} \quad x^2 + y^2 + z^2 = R^2.
\label{eq:opt}
\end{equation}
The Lagrangian function is:
\begin{equation}
\mathcal{L} = d_D - \frac{d_A}{v} - \lambda\left(x^2 + y^2 + z^2 - R^2\right).
\label{eq:Lagrangian}
\end{equation}
By computing the partial derivatives of $\mathcal{L}$ with respect to $x$, $y$, and $z$, and setting them to zero, the critical ratio is derived as:
\begin{equation}
\frac{\partial\mathcal{L}/\partial y}{\partial\mathcal{L}/\partial z} = \frac{y - \frac{d_D}{vd_A}(y-y_0)}{z - \frac{d_D}{vd_A}(z-z_0)} = \frac{y}{z}.
\label{eq:ratio}
\end{equation}
For $b=0$ (defender on $x$-axis), equation \eqref{eq:ratio} reduces to:
\begin{equation}
y_0 z = y z_0 \Rightarrow \pi: -z_0 y + y_0 z = 0.
\label{eq:plane}
\end{equation} 
This establishes that the optimal breach point $B^*$ must lie on plane $\pi$ containing the origin and orthogonal to vector $(0,-z_0,y_0)$ and the coplanarity of points $D$, $A$, and $O$ can be verified by the plane equation(\ref{eq:plane}). We construct an orthonormal coordinate system that projects the 3D defense game onto a 2D plane containing the critical points $O$, $D$, and $A$. The transformation matrix:
\begin{equation}
T = \begin{pmatrix}
1 & 0 & 0 \\
0 & \tfrac{y_0}{s} & \tfrac{z_0}{s} \\
0 & -\tfrac{z_0}{s} & \tfrac{y_0}{s}
\end{pmatrix}, \quad s = \sqrt{y_0^2 + z_0^2},
\end{equation}
aligns the system such that all strategic interactions occur in the $\bar{x}\bar{y}$-plane, where the transformed coordinates become:
\begin{align}
A &\mapsto (\bar{x}_A, \bar{y}_A, 0) = (x_0, s, 0) \\
D &\mapsto (\bar{x}_D, 0, 0) = (a, 0, 0) \\
B &\mapsto (R\cos\alpha, R\sin\alpha, 0).
\end{align}
This reduction enables 2D geometric analysis as shown in Fig. \ref{fig:2d}, where the optimal breach point $B^*$ lies on the transformed equatorial plane and \( \beta \) denotes the angle between the breach point and the tangent to the hemisphere.

The key geometric relationships derive from spherical trigonometry and optimization:

\begin{theorem}\label{thm:breach}
The optimal breach angle $\theta^*$ satisfies the fixed-point system:
\begin{equation}
\begin{cases}
\label{eq:brea_cond}
\theta^* = \phi_A' - \beta^* + \cos^{-1}\left(\dfrac{R\cos\beta^*}{r_A}\right) \\
\beta^* = \cos^{-1}\left(\dfrac{av\sin\theta^*}{\sqrt{a^2+R^2-2aR\cos\theta^*}}\right),
\end{cases}
\end{equation}
where $\phi_A' = \arctan\left(\dfrac{\sqrt{y_0^2+z_0^2}}{|x_0|}\right)$ defines the effective pitch angle.
\end{theorem}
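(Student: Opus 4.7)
The plan is to characterize the optimal breach point $B^*$ in the reduced 2D geometry by combining two ingredients: (i) the attacker's first-order optimality condition on the breach angle, and (ii) the angle-sum identity inside the triangle $\triangle OAB^*$. Parameterize $B$ by its polar angle $\theta$ in the transformed plane, so $B = (R\cos\theta, R\sin\theta)$, and use the law of cosines to write $|DB|^2 = a^2 + R^2 - 2aR\cos\theta$ and $|AB|^2 = r_A^2 + R^2 - 2 r_A R\cos(\theta - \phi_A')$, with $r_A = \sqrt{x_0^2 + s^2}$ the radial distance of $A$.

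For the second equation, differentiate $P(\theta) = |DB| - |AB|/v$ and set $dP/d\theta = 0$. Each derivative admits a clean geometric reading: $d|DB|/d\theta = aR\sin\theta/|DB|$ is the projection of $B$'s tangential velocity onto $\widehat{DB}$, so by the law of sines in $\triangle ODB$ it equals $R\cos\gamma_D$, where $\gamma_D$ is the angle between $DB$ and the tangent to the circle at $B$; an analogous identity holds for $\gamma_A$ on the attacker side. The optimality condition $v\cos\gamma_D = \cos\gamma_A$, with the definition $\beta^* := \gamma_A$, then yields
\[
\cos\beta^* \;=\; \frac{a\,v\,\sin\theta^*}{\sqrt{a^2 + R^2 - 2aR\cos\theta^*}},
\]
which is exactly the second relation in the fixed-point system.

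For the first equation, work inside $\triangle OAB^*$ with sides $|OA|=r_A$, $|OB^*|=R$ and interior angle $\theta^* - \phi_A'$ at $O$. Because $A$ lies outside the hemisphere, the vector from $B^*$ to $A$ sits on the outer side of the tangent at $B^*$, making the interior angle at $B^*$ equal to $\pi/2 + \beta^*$. The law of sines then gives $\sin(\angle OAB^*) = R\cos\beta^*/r_A$; summing the three interior angles to $\pi$ and applying the identity $\pi/2 - \arcsin(x) = \arccos(x)$ yields
\[
\theta^* \;=\; \phi_A' - \beta^* + \arccos\!\left(\frac{R\cos\beta^*}{r_A}\right),
\]
completing the coupled system.

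The main obstacle will be fixing the orientations and sign conventions correctly: whether the interior angle at $B^*$ equals $\pi/2 + \beta^*$ or $\pi/2 - \beta^*$ depends on $A$ being exterior to $\mathcal{H}_R$ and on $\beta^*$ being measured on the same side of the tangent as $A$, and the corresponding subtraction $-\beta^*$ (rather than $+\beta^*$) in the first equation can only be justified through that careful accounting. A secondary issue is excluding spurious critical points on the equatorial rim $\phi = 0$ and confirming that the coupled system admits a consistent solution; both follow from the strict concavity of $P$ in a neighborhood of $B^*$ together with continuity of the two maps on the compact breach manifold $\partial\mathcal{H}_R$.
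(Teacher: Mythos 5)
Your proposal is correct and follows essentially the same route as the paper: parameterize $B$ by $\theta$ in the reduced plane, obtain the $\beta^*$ relation from $dP/d\theta=0$ using the fact that $d|AB|/d\theta = R\cos\beta$ (the paper's ``small-angle approximation'' step), and obtain the $\theta^*$ relation from the geometry of $\triangle OAB^*$. The only cosmetic difference is that you close the triangle via the law of sines and the angle sum, whereas the paper drops a perpendicular from $O$ to $AB$ and equates two expressions for $|AB|^2$ via the cosine law --- the same identity $\cos(\theta^*-\phi_A')=\sin\bigl[\beta^*+\sin^{-1}(R\cos\beta^*/r_A)\bigr]$ results either way.
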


\begin{proof}
1. \textbf{Key Distances}: Applying the law of cosines to both agents:
\begin{align}
\tau_D &= \sqrt{a^2 + R^2 - 2aR\cos\theta} \\
\tau_A &= \frac{\sqrt{r_A^2 + R^2 - 2r_AR\cos(\theta-\phi_A')}}{v}.
\end{align}

2. \textbf{Small-Angle Approximation}: For infinitesimal $d\theta$, trajectories $AB$ and $AB'$ become parallel with:
\begin{equation}
d\tau_A =  \frac{\|BB'\|\cos\beta}{v} = \frac{R\cos\beta}{v}d\theta.
\end{equation}

3. \textbf{Optimality Condition}: Setting $\frac{dP}{d\theta}=0$ yields:
\begin{equation}
\frac{aR\sin\theta}{\sqrt{a^2+R^2-2aR\cos\theta}} = \frac{R\cos\beta}{v}. 
\tag{Deriving $\beta^*$}
\end{equation}

4. \textbf{Geometric Constraint}: From triangle decomposition:
\begin{equation}
\label{eq:ab}
|AB| = \sqrt{r_A^2 - R^2\cos^2\beta} - R\sin\beta.
\end{equation}

5. \textbf{Angle Coupling}: Applying the cosine law to $\triangle ABO$ yields:
\begin{equation}
|AB|^2 = r_A^2 + R^2 - 2r_AR\cos(\theta-\phi_A').
\end{equation}
Substituting Eq. (\ref{eq:ab}) and simplifying leads to the angular coupling:
\begin{equation}
\cos(\theta-\phi_A') = \sin\left[\beta + \sin^{-1}\left(\frac{R\cos\beta}{r_A}\right)\right].
\end{equation}
This derives $\theta^*$ in Theorem \ref{thm:breach}, completing the proof.
\end{proof}

The 3D coordinates emerge from rotating the 2D solution $(R\cos\theta^*, R\sin\theta^*,0)$ back to original axes:
\begin{equation}
B^* = \left( R\cos\theta^*,\ \dfrac{y_0R\sin\theta^*}{\sqrt{y_0^2+z_0^2}},\ \dfrac{z_0R\sin\theta^*}{\sqrt{y_0^2+z_0^2}} \right)^T.
\end{equation}

\subsection{Dynamic Stability Analysis}
\begin{theorem}\label{thm:stability}
The optimal breach point $B^*$ remains invariant under simultaneous motion of attacker $A\to A'$ and defender $D\to D'$ at maximum speeds.
\end{theorem}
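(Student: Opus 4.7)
The plan is to show that under the equilibrium motion of Theorem~\ref{thm:nash} — both agents traveling in straight lines toward $B^*$ at their respective maximum speeds — the first-order optimality conditions that single out $B^*$ as the attacker's best breach point are preserved at every instant, so $B^*$ continues to maximize $P$ over $\partial\mathcal{H}_R$.

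The first step is to parameterize the equilibrium trajectories as $A(t)=A+vt\,\hat u_A$ and $D(t)=D+t\,\hat u_D$, where $\hat u_A=(B^*-A)/\|B^*-A\|$ and $\hat u_D=(B^*-D)/\|B^*-D\|$. The essential geometric observation — that motion toward a fixed target does not rotate the direction to that target — is captured by the identities $B^*-A(t)=(1-vt/\|B^*-A\|)(B^*-A)$ and $B^*-D(t)=(1-t/\|B^*-D\|)(B^*-D)$, obtained by direct substitution and valid on $t<\min(\tau_A,\tau_D)$. In particular, the unit vectors from $A(t)$ and $D(t)$ to $B^*$ remain exactly $\hat u_A$ and $\hat u_D$, while the magnitudes shrink linearly.

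With this invariance in hand, I would establish two facts at $B^*$. First, payoff invariance: $\|D(t)-B^*\|=\|DB^*\|-t$ and $\|A(t)-B^*\|=\|AB^*\|-vt$ substitute into the payoff to give $P_t(B^*)=P_0(B^*)$ by a single cancellation of the $t$ terms. Second, KKT invariance: the constrained gradient $\nabla_B P_t(B)|_{B=B^*}=\hat u_D-\hat u_A/v$ is unchanged from its $t=0$ value, so the Lagrangian condition $\nabla_B P_t(B^*)=2\lambda B^*$ from the spherical constraint continues to hold with the same multiplier $\lambda$ as in the original equilibrium. To lift this from a critical-point statement to a global-maximum statement, I would revisit the fixed-point system of Theorem~\ref{thm:breach}: the equations that pin down $(\theta^*,\beta^*)$ depend only on the invariant directions to $B^*$ together with the spherical geometry, so the very same pair $(\theta^*,\beta^*)$ — and hence the same breach point $B^*$ — continues to solve the Nash condition at every time $t$ before interception.

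The main obstacle I anticipate is this last leg: rigorously transferring uniqueness from $t=0$ to $t>0$. The direction-preservation argument cleanly yields the critical-point identity, but one must preclude new, spurious maximizers appearing on $\partial\mathcal{H}_R$ as the configuration evolves. I expect to handle this either through the monotonicity/uniqueness already built into the fixed-point system of Theorem~\ref{thm:breach}, or by a continuity and nondegeneracy argument along $(A(t),D(t))$ showing that the Apollonius-type level set $\{B:P_t(B)=P_0(B^*)\}$ keeps the same side of tangency with $\partial\mathcal{H}_R$ at $B^*$, so no bifurcation of the critical structure can occur before capture.
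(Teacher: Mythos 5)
Your argument is correct and reaches the same conclusion as the paper, but by a genuinely different route. The paper works in the rotated 2D frame of Fig.~\ref{similar}: it re-aligns the coordinate system so that $D'$ lies on the $x$-axis and then verifies, by dropping a perpendicular from $O$ to $\overline{AB^*}$ and invoking triangle similarity, that the pair $(\theta^*-\theta',\beta^*)$ satisfies both equations of the fixed-point system \eqref{eq:brea_cond} for the new parameters---i.e., it checks the angular first-order conditions directly in spherical-trigonometric form. You instead stay in fixed Cartesian coordinates and observe that straight-line motion toward $B^*$ rescales but does not rotate the vectors $B^*-A$ and $B^*-D$, so the payoff gradient $\hat u_D-\hat u_A/v$ at $B^*$, and hence the Lagrange stationarity condition on the sphere, are literally unchanged; this collapses the paper's trigonometric bookkeeping into one line and yields the bonus identity $P_t(B^*)=P_0(B^*)$ along equilibrium play. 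The two arguments establish exactly the same thing---preservation of the first-order optimality conditions---and both share the residual issue you flag explicitly: neither rules out that a different global maximizer of $P_t$ on $\partial\mathcal{H}_R$ emerges at some $t>0$ (the paper silently treats the solution of \eqref{eq:brea_cond} as unique). Your candor about that last leg is not a defect relative to the paper; if you want to close it, the continuity/nondegeneracy argument you sketch is the right instrument, since the fixed-point system is itself only a stationarity condition and cannot certify global optimality on its own.
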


\begin{proof}
1. \textbf{Constrained Trajectories}: Under maximum-speed motion, ensuring collinearity with $B^*$:
\begin{equation}
A' \in \overline{AB^*}, \quad D' \in \overline{DB^*}, \quad \forall \Delta t > 0,
\end{equation}

2. \textbf{Verification of $(\theta^* - \theta', \beta^*)$}: We verify that $(\theta^* - \theta', \beta^*)$ satisfies the iterative update rules. First, rotate the coordinate system so that $D'$ lies on the $x$-axis, as shown in Fig.~\ref{similar}. The perpendicular from $O$ to $AB$ intersects at $C$, so $\angle BOC = \beta^*$. In $\triangle OA'C$, we derive:
\begin{equation}
\cos^{-1} \left(\frac{\|OC\|}{r_A'}\right) = \cos^{-1} \left(\frac{R \cos \beta^*}{r_A'}\right) = \angle A'OC.
\end{equation}
Thus, as illustrated in Fig.~\ref{similar}, the following holds:
\begin{multline}
\phi_A'' - \beta^{*} + \cos^{-1} \left(\frac{R \cos \beta^*}{r_A'}\right) = \phi_A'' - \beta^{*} + \angle A'OC \\
= \angle BOD' = \theta^* - \theta',
\end{multline}
confirming that $(\theta^* - \theta', \beta^*)$ satisfies the iterative update (\ref{eq:brea_cond}) for $\theta^*$. Next, from Fig.~\ref{similar}, the similarity of triangles ensures:
\begin{equation}
\frac{u \sin(\theta^* - \theta')}{\sqrt{u^2 + R^2 - 2uR \cos(\theta^* - \theta')}} 
= \frac{\|H'D'\|}{\|BD'\|} = \frac{\|HD\|}{\|BD\|}.
\end{equation}
This confirms that $(\theta^* - \theta', \beta^*)$ also satisfies the update rule (\ref{eq:brea_cond}) for $\beta^*$. Hence, as the defender and attacker move from $(D, A)$ to $(D', A')$, the optimal breach point $B^*$ remains invariant.
\end{proof}

\subsection{Winning Region Characterization}
\begin{theorem}[Zero-Payoff Surface]\label{thm:zerosurface}
For any given defender position \( Z_D \), there exists a unique, closed, and continuously differentiable surface \( \mathcal{S}(Z_D) \) such that  
\begin{equation}
P(Z_D, Z_A, Z_B) = 0, \quad \forall Z_A \in \mathcal{S}(Z_D).
\end{equation}
\end{theorem}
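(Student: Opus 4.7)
The plan is to recast the zero-payoff surface as the zero level set of the defender's value function and then read off all four required properties from standard results about parametric maxima. Define $P^{*}(Z_A) := \max_{Z_B \in \partial\mathcal{H}_R} P(Z_D, Z_A, Z_B)$ with $Z_D$ held fixed; by Theorem \ref{thm:nash} this is precisely the attacker's equilibrium payoff. Setting $\mathcal{S}(Z_D) := (P^{*})^{-1}(0)$, the statement reduces to showing that $P^{*}$ is continuous, attains both signs, has a nowhere-vanishing gradient along its zero set, and diverges to $-\infty$ at infinity.

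Continuity follows directly from Berge's maximum theorem, since $P$ is jointly continuous and $\partial\mathcal{H}_R$ is compact. For the sign change, when $Z_A$ is placed just outside $\partial\mathcal{H}_R$ and $Z_B$ is taken as its nearest boundary point, $\tau_A \to 0$ while $\tau_D \geq \operatorname{dist}(Z_D,\partial\mathcal{H}_R) > 0$, so $P^{*}(Z_A) > 0$; conversely $\tau_A \geq (\|Z_A\| - R)/v \to \infty$ while $\tau_D$ stays bounded by $\|Z_D\| + R$, giving $P^{*}(Z_A) \to -\infty$. Applying the intermediate value theorem along any outward ray from $Z_D$ produces a crossing, and the same asymptotic estimate shows that $\mathcal{S}(Z_D)$ is a closed bounded subset of $\mathbb{R}^{3}$, hence compact and closed.

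For differentiability I would invoke the envelope theorem applied to the inner problem already solved in Theorem \ref{thm:breach}. Any variation of $B^{*}(Z_A)$ induced by perturbations of $Z_A$ stays tangent to $\partial\mathcal{H}_R$, while $\nabla_{Z_B} P|_{B^{*}}$ is normal to $\partial\mathcal{H}_R$ by first-order optimality, so the chain-rule cross terms cancel and
\begin{equation}
\nabla_{Z_A} P^{*}(Z_A) \;=\; -\frac{1}{v}\,\frac{Z_A - B^{*}(Z_A)}{\|Z_A - B^{*}(Z_A)\|}.
\end{equation}
This gradient has constant magnitude $1/v$ and is therefore nowhere vanishing, since $Z_A \notin \partial\mathcal{H}_R$ while $B^{*} \in \partial\mathcal{H}_R$. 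The implicit function theorem then upgrades $\mathcal{S}(Z_D)$ to a $C^{1}$ codimension-one submanifold, and uniqueness is automatic from the level-set definition.

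The main obstacle I foresee is justifying the envelope identity at configurations where the maximizer $B^{*}(Z_A)$ fails to be single-valued, which can happen along the symmetry axis through $O$ and $Z_D$. On the complement of such a lower-dimensional degenerate locus the fixed-point system (\ref{eq:brea_cond}) gives local uniqueness and smooth dependence of $B^{*}$ on $Z_A$ via the implicit function theorem, and the stability result of Theorem \ref{thm:stability} selects a coherent branch across any residual ambiguity. Because $P^{*}$ is globally Lipschitz by Berge's theorem and $C^{1}$ on an open dense set with a uniformly nonvanishing gradient, $C^{1}$-regularity extends to the entire level set by a standard density argument, completing the plan.
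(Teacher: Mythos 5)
Your route is genuinely different from the paper's. The paper argues constructively: it rotates coordinates so that $Z_D$ lies on the $x$-axis, exploits the resulting rotational symmetry of the whole configuration about that axis, solves the scalar equation $\tau_D=\tau_A$ (with $\beta^*$ supplied by the fixed-point system of Theorem~\ref{thm:breach}) for a profile curve in a half-plane, and obtains $\mathcal{S}(Z_D)$ as the surface of revolution of that curve. Your level-set formulation via $P^*(Z_A)=\max_{Z_B\in\partial\mathcal{H}_R}P$ is more analytic and, where it works, buys more: Berge's theorem gives continuity for free, the sign analysis plus the intermediate value theorem gives existence and boundedness of the zero set, and the envelope identity $\nabla_{Z_A}P^*=-\tfrac{1}{v}\,(Z_A-B^*)/\|Z_A-B^*\|$ with constant norm $1/v$ is exactly the right tool to extract $C^1$ regularity from the implicit function theorem --- a point the paper's own proof never actually justifies.

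However, your final step has a genuine gap. From ``$P^*$ is globally Lipschitz and $C^1$ on an open dense set with uniformly nonvanishing gradient'' you cannot conclude that the zero level set is $C^1$: the function $f(x,y)=|x|+y$ satisfies all of these hypotheses near the origin, yet its zero set $y=-|x|$ has a corner. The degenerate locus here is precisely the line through $O$ and $Z_D$, where the maximizer $B^*$ is generically a whole circle; Danskin's theorem then only gives directional derivatives equal to a maximum over genuinely different candidate gradients, so $P^*$ can have a conical crease along that axis, and the level set could a priori have a conical point where it meets it. The repair is exactly the symmetry the paper's construction rests on: the problem is invariant under rotation about $OZ_D$, so $\mathcal{S}(Z_D)$ is a surface of revolution, and smoothness at the two axis points reduces to the planar profile curve meeting the axis orthogonally, which follows from its reflection symmetry $\theta\mapsto-\theta$. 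You should also close the uniqueness/connectedness claim rather than calling it automatic: your gradient formula does this if you check that $\langle u,\,Z_A-B^*(Z_A)\rangle>0$ along outward rays, so that $P^*$ is strictly decreasing there and each ray crosses the level set exactly once, making $\mathcal{S}(Z_D)$ a single topological sphere.
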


\begin{proof}
We first apply a coordinate transformation to align \( Z_D \) with the \( x \)-axis, yielding \( Z_D = (a,0,0) \). The attacker's position in the \( xy \)-plane is parameterized using polar coordinates \( (r_A, \theta) \). The optimal breach angle \( \beta^* \) is determined by Eq. (\ref{eq:brea_cond}). By enforcing the condition that the defender and attacker reach the interception point simultaneously, i.e., \( \tau_D = \tau_A \), we derive the set of feasible attacker positions \( Z_A \). Finally, revolving this solution around the \( x \)-axis generates the desired three-dimensional surface. \qedhere
\end{proof}
 
\section{System model and problem statement}
\label{sec:marl_for_pg}
In real-world scenarios, missiles exhibit considerable diversity in dynamics, control parameters, flight speeds, and are subject to complex environmental factors such as wind fields. To achieve a more realistic validation, we introduce a large-scale heterogeneous perimeter-defense game under the following assumptions. Attackers are assumed to possess unknown dynamics, allowing them to navigate wind perturbations and other environmental uncertainties with robustness. Their objective is to reach an optimal breach point at maximum velocity. In contrast, defenders are algorithmically controlled and tasked with intercepting the attackers while contending with wind perturbations and diverse dynamic constraints. Furthermore, to avoid flight interference and the risk of explosions from nearby allied missiles, defenders are required to maintain a safe separation distance.

Building on the one-on-one nash equilibrium strategies for both attackers and defenders, we utilize the Hungarian algorithm \cite{ref43} for optimal target assignment in the large-scale perimeter-defense game, ensuring efficient coordination and use MARL to solve the problem.

\subsection{Dec-POMDP}
To model the defender's decision-making in the perimeter-defense game, we formulate the problem as a decentralized partially observable Markov decision process (Dec-POMDP), represented by the tuple $M = (N, S, \{A_i\}_{i=1}^{n}, \{O_i\}_{i=1}^{n}, R, T, \{U_i\}_{i=1}^{n}, \gamma)$. Here, $N$ denotes the set of agents, while $S$, $\{A_i\}$, and $\{O_i\}$ represent the state space, individual action spaces, and individual observation spaces, respectively. The joint action and observation spaces are given by $A = A_1 \times A_2 \times \dots \times A_n$ and $O = O_1 \times O_2 \times \dots \times O_n$. At each time step $t$, the state of the environment is $s_t \in S$, and the agents take a joint action $a_t \in A$, which leads to the next state following the transition dynamics $s_{t+1} \sim T(s_{t+1} | s_t, a_t)$.

\subsection{State Transition and Action Space}
To enhance realism, we incorporate wind field perturbations and dynamic variations into the environment. The wind-induced perturbations are computed using a model that considers the agent's information, with the wind speed defined as:  
\begin{equation}  
\mathbf{w} = f(\text{height}, \mathbf{p}, \mathbf{v}) + \mathcal{N}(0, \sigma^2),  
\end{equation}  
where \(f(\cdot)\) captures systematic variations in wind speed based on the agent's height, position \(\mathbf{p}\), and velocity \(\mathbf{v}\). The term \(\mathcal{N}(0, \sigma^2)\) represents Gaussian noise with mean 0 and variance \(\sigma^2\), accounting for the stochastic nature of the wind. The agent's velocity is influenced by both the wind perturbations and its own acceleration  \(\mathbf{a}\) resulting from the agent's dynamics, with the velocity update formulated as: 
\begin{equation}  
\mathbf{v}^{t+1} = \mathbf{v}^t + \mathbf{w} + \mathbf{a} \Delta t,  
\end{equation}  

We consider three types of motion dynamics for agent control: (i) missile dynamics, where the agent manipulates angular velocities in the yaw and pitch directions to adjust its trajectory \cite{ref44}; (ii) acceleration-based dynamics, where the agent controls the increments of the yaw and pitch angles to influence the rate of directional change; and (iii) velocity-based dynamics, where the agent directly sets the yaw and pitch angles to determine the velocity direction. Thus, the action for agent \(i\) is defined by its dynamics type \(t_i\), allowing it to control either its heading angle, acceleration, or velocity vector adjustments by a two-dimensional continuous vector. Additionally, we incorporate variations in wind field parameters and velocity settings across these control paradigms, resulting in a total of 16 distinct dynamic types in the environment.

\subsection{Observation Space}
The observation space for agent \( i \) in the interception scenario is defined as follows:

\begin{align}
    \text{Observation}_i = 
    \Bigg\{ 
    &\underbrace{(x_i, y_i, z_i, \psi_i, \theta_i, v_{xi}, v_{yi}, v_{zi}, t_i)}_{\text{Self-Information}}, \nonumber \\
    &\underbrace{\{(d_{ij}, \theta_{ij}, \phi_{ij}) \mid j \neq i \}}_{\text{Other Agents' Information}}, \nonumber
    \\
    &\underbrace{(d_{iA}, \theta_{iA}, \phi_{iA})}_{\text{Target Information}} 
    \Bigg\}.
\end{align}
In this formulation, the \emph{self-information} component describes the intrinsic state of agent \( i \), comprising its position \( (x_i, y_i, z_i) \), orientation angles \( (\psi_i, \theta_i) \), velocity components \( (v_{xi}, v_{yi}, v_{zi}) \), and current dynamics type \( t_i \). Here, \( \psi_i \) and \( \theta_i \) represent the yaw and pitch angles, respectively. The \emph{other agents' information} encodes the relative state of nearby agents, where \( d_{ij} \) denotes the Euclidean distance to agent \( j \), and \( (\theta_{ij}, \phi_{ij}) \) are the relative pitch and yaw angles with respect to agent \( j \). The \emph{target information} captures the state of the assigned attacker from the perspective of agent \( i \), specified by the distance \( d_{iA} \) and the corresponding pitch and yaw angles \( (\theta_{iA}, \phi_{iA}) \).

\subsection{Reward}
The reward function is designed to guide each defender toward efficient and safe interception of its assigned attacker. For agent \( i \), the reward at time step \( t \) is defined as:
\begin{equation}
r_{it} = r_{it}^{\mathrm{task}} + r_{it}^{\mathrm{guide}} + r_{it}^{\mathrm{collide}},
\end{equation}
where \( r_{it}^{\mathrm{task}} = \alpha_1 + \alpha_2 \cdot \mathbb{I}(d_{iA}^{t} < d_{\mathrm{th}}) \) imposes a stepwise penalty (\( \alpha_1 = -0.01 \)) to encourage timely completion, and grants a sparse reward (\( \alpha_2 = 10 \)) when the agent successfully intercepts its assigned attacker, i.e., when the distance \( d_{iA}^{t} \) falls below a predefined threshold \( d_{\mathrm{th}} \) and the indicator function \( \mathbb{I}(\cdot) \) returns 1 if the condition holds, and 0 otherwise. To encourages defender to reduce its distance to its assigned attacker, the guidance term \( r_{it}^{\mathrm{guide}} = \alpha_3 \cdot (d_{iA}^{t-1} - d_{iA}^{t}) \) with \( \alpha_3 = 10 \) provides dense feedback based on the reduction in attacker distance across consecutive steps. Finally, potential collisions are penalized through \( r_{it}^{\mathrm{collide}} = \alpha_4 \cdot \sum_{j \ne i} \mathbb{I}(d_{ij}^{t} < d_{\mathrm{safe}}) \), where \( \alpha_4 = -0.03 \) and \( d_{\mathrm{safe}} \) denotes a minimum safety distance.

\section{Embedded Mean Field Reinforcement Learning}
\label{sec:emfac_method}
To tackle the complex large-scale tasks, we propose \textit{Embedded Mean Field Actor-Critic} (EMFAC), a novel reinforcement learning framework, the overall architecture is illustrated in Fig. \ref{emfac_method}. EMFAC consists of two key components: the High-level Action Learning Module, which leverages state representations to learn high-level mean-field actions, effectively decoupling the impact of heterogeneous motion dynamics and accelerating the training process; and the Agent-level Attention Module, which derives agent-specific lightweight attention based on reward representations, enabling selective extraction of the most relevant agent information from both the mean-field action space and the state space. In the following sections, we conduct an in-depth analysis of each module.

\begin{figure*}[htp]
    \centering
    \includegraphics[width=2\columnwidth]{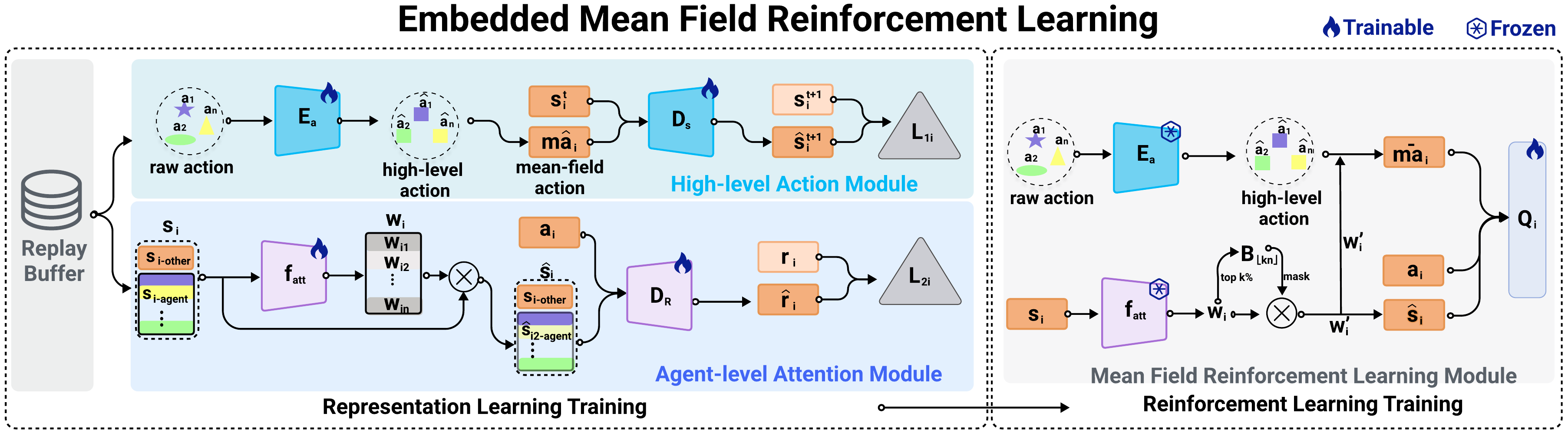} 
    \caption{Framework of the EMFAC Method. The left part shows the training of the high-level action encoder \(E_a\) and attention module \(f_{\text{att}}\), while the right illustrates the mean-field RL process using them. Both modules are trained alternately.}
    \label{emfac_method}
\end{figure*}

\subsection{Embedded High-level Mean Field Action}
To address the challenge of heterogeneous motion dynamics and enhance the expressiveness of mean-field representations, we introduce a \textit{High-level Action Encoder}, which encodes raw heterogeneous actions into a unified high-level action space. This transformation enables more effective mean-field aggregation. The core idea behind mean-field aggregation is to model the influence of other agents on the central agent. To train the high-level action encoder $E_a$, we introduce an auxiliary task from the perspective of state prediction. Specifically, we apply $E_a$ to encode a raw action $a$ into an abstract high-level action $\hat{a}$. The high-level action is first used to compute the mean-field action $\hat{ma}_i$, which is then concatenated with the current state $s^t_i$ and passed through a \textit{State Decoder} $D_s$ to predict the next-step state $\hat{s}^{t+1}_i$. To enhance robustness, we employ the Huber loss \cite{ref45} for training, which mitigates the impact of outliers and stabilizes learning. The model is trained by minimizing the loss between the predicted and true next-step state $s^{t+1}_i$. The formal equations are given as follows:
\begin{equation}
    E_a(a_1, a_2, \dots, a_n) = (\hat{a}_1, \hat{a}_2, \dots, \hat{a}_n),
\end{equation}
\begin{equation}
\hat{ma}_i = \frac{1}{N_i} \sum_{j \in N(i)} \hat{a}_i,
\end{equation}
\begin{equation}
    D_s(\hat{ma}_i, s_i^t) = \hat{s}_i^{t+1}, \quad i = 1, \dots, n,
\end{equation}
\begin{equation}
\label{eq:high_action_loss}
    \mathcal{L}_1 = \frac{1}{n} \sum_{i=1}^{n} \text{HuberLoss}(\hat{s}_i^{t+1}, s_i^{t+1}).
\end{equation}
Here, \( N(i) \) denotes the set of neighboring agents for agent \( i \), with \( N_i \) being the number of such neighbors. By leveraging high-level mean-field action representations, agents can more effectively capture the influence of others, improving their understanding of observation and reward structures.

\subsection{Agent-level Attention Module}
In large-scale multi-agent tasks, the vast number of agents, high-dimensional state spaces, and frequent agent interactions introduce significant challenges for reinforcement learning. The reward function often entangles the influences of multiple agents, making exploration inefficient and hindering the ability to focus on critical information. To address these challenges, we propose an \textit{Agent-level Attention Module} based on reward representation learning, which enables agents to selectively attend to other agents that are most relevant to task completion. This attention mechanism is applied to both the state space and the high-level mean-field action, thereby enhancing the agent's ability to capture essential information.

To train the agent-level attention module \( f_{\text{att}} \), the agent's state \( s_i \), which includes information about other agents, is processed by \( f_{\text{att}} \), implemented as a multi-layer perceptron (MLP). It outputs agent-specific attention weights used to compute a weighted sum over the states, resulting in the attention-enhanced state \( \hat{s}_i \). This enhanced state, together with the agent's action \( a_i \), is passed to a parameterized \textit{Reward Decoder} \( D_R \) to predict the estimated reward \( \hat{r}_i \). The module is trained by minimizing the mean squared error (MSE) between the predicted and true rewards. The formal definitions are as follows:
\begin{equation}
    f_{\text{att}}(s_1, s_2, \dots, s_n) = (w_1, w_2, \dots, w_n),
\end{equation}
where $w_i = (w_{i1}, w_{i2}, \dots, w_{in})$ represents the agent-specific attention weights for agent $i$, with $w_{ij}$ denoting the attention weight of agent $i$ towards agent $j$. Using these weights, the attention-enhanced state is computed as:
\begin{equation}
    \hat{s}_i = (s_{i-other}, s_{i-agent} \cdot w_i),
\end{equation}
where $s_{i-other}$ denotes the agent-independent information of the state, while $s_{i-agent}$ represents information related to other agents. The reward prediction and training objective are defined as:
\begin{align}
    \hat{r}_i &= D_R(\hat{s}_i, a_i), \\
    \mathcal{L}_2 &= \frac{1}{n} \sum_{i=1}^{n} \text{MSELoss}(r_i, \hat{r}_i). \label{eq:att_loss}
\end{align}
By leveraging reward-guided attention, agents learn to filter out irrelevant interactions and focus on those critical to task success.

\subsection{Training of EMFAC}
During Q-network training, the parameters of $E_a$ and $f_{\text{att}}$ remain fixed. Each agent encodes raw actions into high-dimensional representations through $E_a$, while computing attention weights $w_i \in \mathbb{R}^n$ via $f_{\text{att}}$ based on states.  To encourage more focused interactions, we impose a constraint on the number of attended agents by applying a refined attention mechanism, which is formulated as: 
\begin{align}
B_{\lfloor k n \rfloor}(w_i) = 
\begin{cases} 
1 & \text{if } w_{ij} \in \text{top-}\lfloor k n \rfloor(w_i) \\
0 & \text{otherwise}
\end{cases}, \\
w'_i = \text{softmax}\left( w_i \odot B_{\lfloor k n \rfloor}(w_i) \right),
\end{align}
where $k \in (0,1)$ is the sparsity ratio, $\odot$ denotes element-wise multiplication, and $\lfloor \cdot \rfloor$ ensures integer truncation. Using these refined attention weights, we compute the attention-weighted high-level mean-field action $\bar{ma}_i$ and the attention-weighted state $\hat{s}_i$:
\begin{align}
\label{eq:ma_i}
\bar{ma}_i = \frac{1}{N_i} \sum_{j \in N(i)} \hat{a}_j \cdot w_{ij}^{'},\\
\label{eq:att_o_i}
\hat{s}_i = (s_{i-other}, s_{i-agent} \cdot w_i'),
\end{align}
It is worth noting that under the decentralized setting, the observation \( o_i \) can be processed in the same manner as the state \( s_i \) previously described, yielding a attention-weighted observation \( \hat{o}_i \) for training the Q-network. This design enables our method to be seamlessly applied to both CTDE (Centralized Training with Decentralized Execution) and DTDE (Decentralized Training and Decentralized Execution) paradigms, demonstrating its broad applicability and flexibility. The computed high-level mean-field action and the processed state are then fed into the agent’s Q-network for MFRL. Under the mean-field assumption, the $Q$-function can be approximated as:
\begin{equation}
    Q_{\phi_i}(\hat{s}_i, \hat{a}_1, \hat{a}_2,..., \hat{a}_n) \approx Q_{\phi_i}(\hat{s}_i, \hat{a}_i, \bar{ma}_i).
\end{equation}
We denote the target Q-network parameters by $\phi_i^-$, and $\hat{s}_i’$ represents the next attention-weighted state. The value function under the attention-weighted next state $\hat{s}_i'$ is computed as:
\begin{align}
v_i\left(\hat{s}_i'\right)=\sum_{\hat{a}_i} \pi_{\theta_i}\left(\hat{a}_i \mid \hat{s}_i'\right) \mathbb{E}_{\bar{ma}_i\left(\boldsymbol{\hat{a}}^{-i}\right) \sim \bar{\pi}^{-i}}\left[Q_{\phi_i^-}\left(\hat{s}_i', \hat{a}_i, \bar{ma}_i\right)\right].
\end{align}
The Q-network is trained using the following loss function:
\begin{align} 
\label{eq:y_i}
    y_i = r_i + \gamma   v_i\left(\hat{s}_i'\right),\\
\label{eq:critic_loss}
    \mathcal{L}(\phi_i) = \left(y_i - Q_{\phi_i}(\hat{s}_i, \hat{a}_i, \bar{ma}_i) \right)^2,
\end{align}
 To maintain timeliness during the inference phase, for the actor, instead of using attention-weighted observation $\hat{o}_i$, we utilize the original observations $o_i$ and the actor is updated using the policy gradient:
\begin{equation}
\label{eq:actor_loss}
    \nabla_{\theta_i} J(\theta_i) \approx \nabla_{\theta_i} \log \pi_{\theta_i} (o_i) Q_{\phi_i}(\hat{s}_i, \hat{a}_i, \bar{ma}_i) \bigg|_{a_i = \pi_{\theta_i}(o_i)}.
\end{equation}

By leveraging high-level action encoding and agent-level attention, EMFAC enhances the efficiency and scalability. In practical training, we concurrently conduct MFRL and the training of $E_a$ and $f_{\text{att}}$. During off-policy training, the training of $E_a$ and $f_{\text{att}}$ converges more rapidly. After a certain period of training and convergence, their parameters can be frozen, allowing us to focus solely on the training of reinforcement learning. This approach introduces minimal additional computational overhead to the algorithm's execution. The specific algorithmic procedure is illustrated in the pseudocode \ref{alg}.

\begin{algorithm}
\caption{EMFAC for Multi-agent Reinforcement Learning}
\label{alg}
\begin{algorithmic}[1]
\STATE Initialize $Q_{\phi_i}, Q_{\phi^-_i}, \pi_{\theta_i}, \pi_{\theta^-_i}$ for all $i \in \{1, \ldots, N\}$
\WHILE{training not finished}
    \FOR{each agent $i$}
        \STATE Sample action $a_i = \pi_{\theta_i}(o_i)$
        \STATE Take the joint action $a = [a_1, \ldots, a_N]$ and observe the reward $r = [r_1, \ldots, r_N]$ and the next state $s'$
        \STATE Store $\langle s, a, r, s' \rangle$ in replay buffer $\mathcal{D}$
    \ENDFOR
    \FOR{$i = 1$ to $N$}
        \STATE Sample a minibatch of $K$ experiences $\langle s, a, r, s' \rangle$ from $\mathcal{D}$
        \STATE Calculate $\bar{ma}_i$ and $\hat{o}_i$ by Eq. (\ref{eq:ma_i}) and Eq. (\ref{eq:att_o_i})
        \STATE Calculate $y_i$ by Eq. (\ref{eq:y_i})
        \STATE Update the critic by Eq. (\ref{eq:critic_loss})
        \STATE Update the actor by Eq. (\ref{eq:actor_loss})
    \ENDFOR
    \STATE Update the parameters of the target networks for each agent $i$ with learning rates $\tau_\phi$ and $\tau_\theta$:
    \[
    \phi^-_i \leftarrow \tau_\phi \phi_i + (1 - \tau_\phi) \phi^-_i
    \]
    \[
    \theta^-_i \leftarrow \tau_\theta \theta_i + (1 - \tau_\theta) \theta^-_i
    \]
    \STATE Sample a minibatch of $K$ experiences $\langle s, a, r, s' \rangle$ from $\mathcal{D}$ to train $E_a$ and $f_{\text{att}}$:
    \STATE Update $E_a$ by Eq. (\ref{eq:high_action_loss})
    \STATE Update $f_{att}$ by Eq. (\ref{eq:att_loss})
\ENDWHILE
\end{algorithmic}
\end{algorithm}

\section{Experiment}
\label{sec:experiment}
\subsection{Nash Equilibrium Verification}
\label{sec:nash_exp}
\subsubsection{Experimental Setup}
In this part, our validate the correctness of the proposed Nash equilibrium strategy, we conducted a series of simulation experiments under three strategic settings: (1) both defenders and attackers follow the optimal strategy, (2) only attackers adhere to the optimal strategy while defenders determine their interception points by computing the intersection of the attacker-defender line with the engagement hemisphere, and (3) only defenders follow the optimal strategy while attackers slightly deviate from the optimal breach point. Both sides were randomly initialized, with attackers positioned outside the perimeter and defenders inside, ensuring that under equilibrium conditions, both reached the optimal breach point simultaneously. The attacker’s speed was set to 1.0, while the defender’s speed was 0.8. 

\subsubsection{Simulation Results}
As illustrated in Fig. \ref{nash_verify_exp}, when both attacker and defender adhere to the Nash equilibrium strategy, they reach the optimal breach point simultaneously, resulting in a payoff of zero. When defender unilaterally deviates from the Nash strategy, its interception time increases, leading to a reduced payoff and a decline in defender performance. Conversely, when attacker unilaterally deviate, its breach time increases, resulting in a higher payoff but a decreased offensive effectiveness. These results empirically validate that the derived strategy constitutes a Nash equilibrium for both defenders and attackers. 

To better understand the strategic outcomes of the game, we visualize the decision perimeter that delineates the winning regions for both sides. Fixing the initial position of the defender, we identify the set of initial attacker positions $G$, where the time required for both agents to reach the optimal breach point is equal. As shown in Fig. \ref{nash_analy_exp}, the set $G$ forms a closed spherical surface, which aligns with our theoretical analysis. This perimeter serves as the critical threshold: if the attacker’s initial position lies outside the surface, the defender secures victory; conversely, if the initial position is within the surface, the attacker successfully breaks through.

\begin{figure*}[!t]
\centering
\subfloat[]{\includegraphics[width=1.4\columnwidth,height=2.8cm]{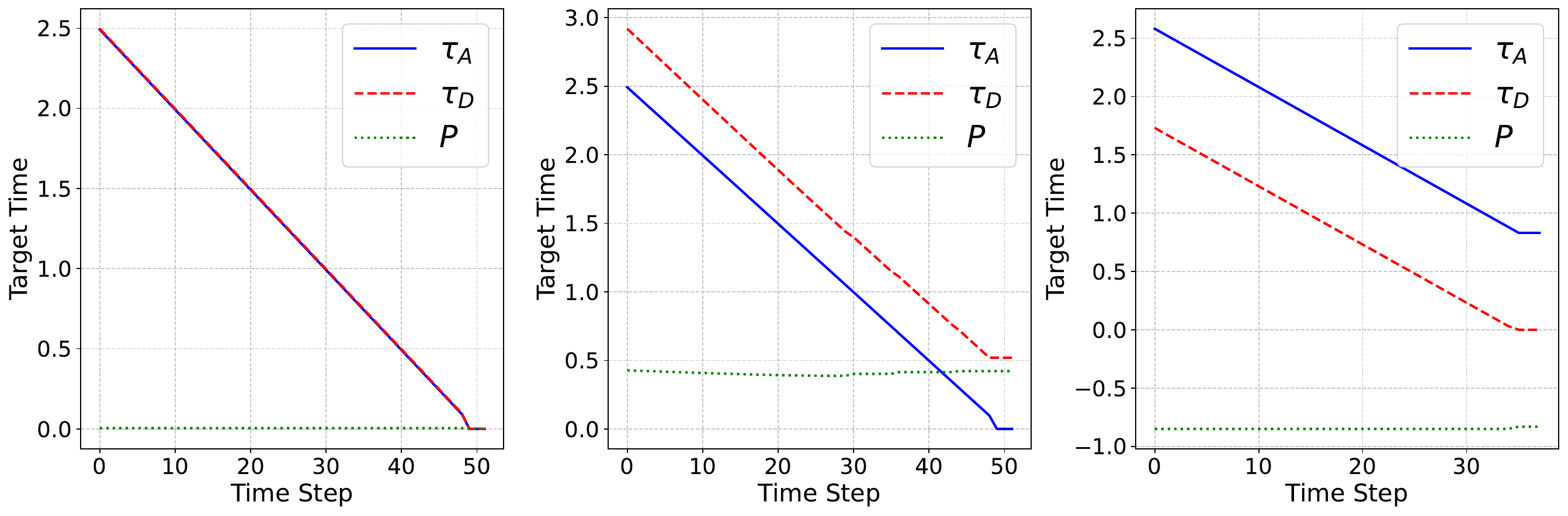}%
\label{nash_verify_exp}}
\hfil
\subfloat[]{\includegraphics[width=0.55\columnwidth,height=3.8cm]{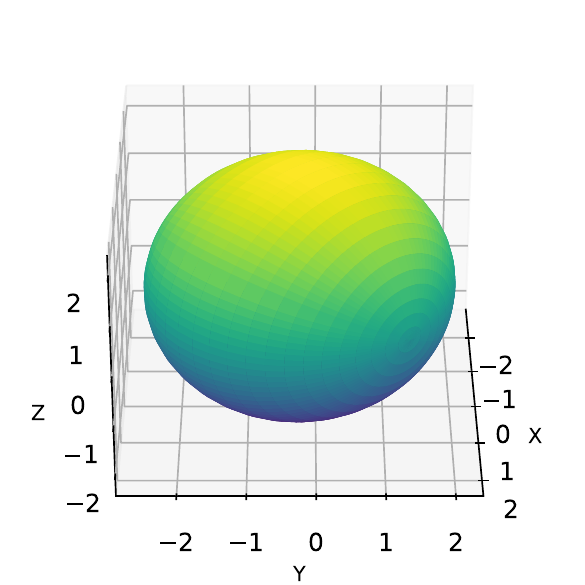}%
\label{nash_analy_exp}}
\caption{Simulation results for Nash equilibrium validation. (a) The time required for both attackers and defenders to reach the breach point and their corresponding payoffs. The three scenarios, from left to right, are: (i) both the attacker and defender adhering to the Nash equilibrium strategy, (ii) only the attacker following the Nash equilibrium strategy, and (iii) only the defender adopting the Nash equilibrium strategy. (b) Visualization of the decision perimeter that delineates the winning regions for both sides, with the initial defender position set at $[0.2, 0.2, 0.0]$.}
\label{nash_exp}
\end{figure*}

\subsection{Large-scale Heterogeneous Perimeter-defense Game Experiment}
\label{sec:peri_game_exp}
\subsubsection{Experimental Setup}
To evaluate the effectiveness of our algorithm, we run EMFAC on our large-scale heterogeneous perimeter-defense game. All algorithms are implemented within the same framework HARL\cite{ref25, ref26}, with identical architectures and hyperparameters. Follow \cite{ref27, ref31}, we compare EMFAC with several baseline methods.

\begin{itemize}
    \item \textbf{Rule-based strategy:} Defenders fly directly towards the optimal breach point at maximum speed, without accounting for wind perturbations. To prevent collisions, they randomly select a direction for interception when too close to other agents.
    \item \textbf{Independent learning methods:} We include IDDPG \cite{ref46} and ITD3 \cite{ref47} as baselines.
    \item \textbf{Multi-agent learning methods:} To assess the impact of cooperation, we compare our approach against MADDPG \cite{ref48} and MATD3.
    \item \textbf{Heterogeneity-aware algorithms:} To specifically evaluate methods designed for heterogeneous environments, we include HADDPG \cite{ref26} and HATD3 \cite{ref26}.
    \item \textbf{Scalability-oriented approaches:} Given the large-scale heterogeneous nature of our task, we also compare against the MTMFAC \cite{ref31} algorithm.
\end{itemize}

\subsubsection{Main Results}
We conducted experiments on perimeter-defense game with scales of 10, 20, 30, and 50 with 5 seeds. The task requires a comprehensive consideration of the interception success rate and the collision rate between defenders. Therefore, we adopted the average test reward as the most important evaluation metric. This metric takes into account the weights of the two indicators according to the designer's requirements and is consistent with the specific task requirements. The testing average reward curves are shown in the Fig. \ref{main_results}. On tasks of different scales, the EMFAC method achieved the best performance. With the help of representation learning and agent-level attention, EMFAC can more effectively learn the influence of other agents, attribute rewards, and focus on important information. This results in faster learning rates and better convergence performance. In particular, EMFAC shows a significant performance gap compared to other algorithms even in large scales. As the scale continues to expand, the observation space grows exponentially, leading to a certain degree of performance decline in all algorithms. However, EMFAC still achieved the best results. This demonstrates the robustness and effectiveness of EMFAC in handling complex and large-scale multi-agent tasks.

We have also demonstrated the average success and collision rates of various algorithms at convergence across different scales in Fig. \ref{main_rate}. It is observable that as the scale of perimeter-defense game expands, the observation and action spaces of the agents experience an exponential increase. Consequently, collisions become more frequent, and the interference of collision penalties in the reward further intensifies. As a result, the success rates of different algorithms decline, while collision rates rise. However, the EMFAC algorithm achieves the highest success rates and the lowest collision rates, showing a significant margin over other algorithms. This also verifies the effectiveness of the EMFAC method in large-scale tasks.

\begin{figure*}[htp]
    \centering
    \includegraphics[width=2\columnwidth]{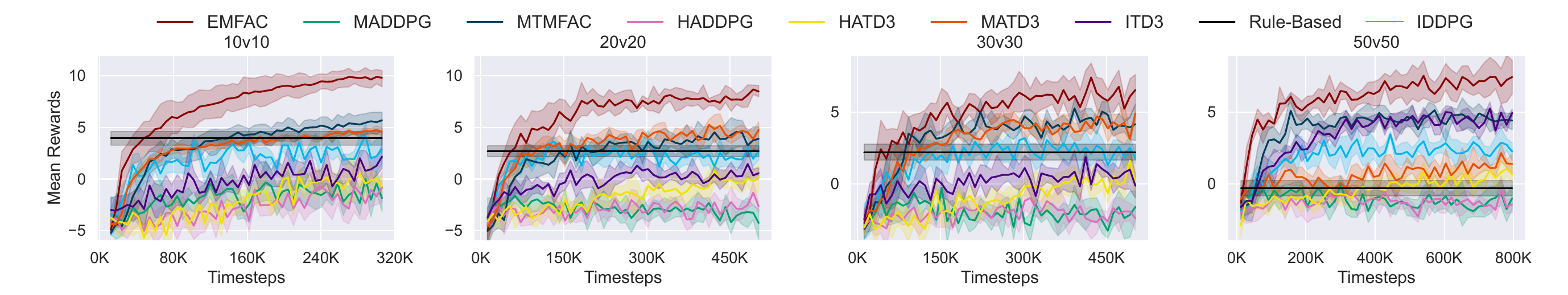} 
    \caption{Comparison of average reward learning curves for different algorithms in perimeter-defense game tasks of varying scales. (a)Success Rate. (b) Collision Rate.}
    \label{main_results}
\end{figure*}

\begin{figure*}[!t]
\centering
\subfloat[]{\includegraphics[width=1.0\columnwidth]{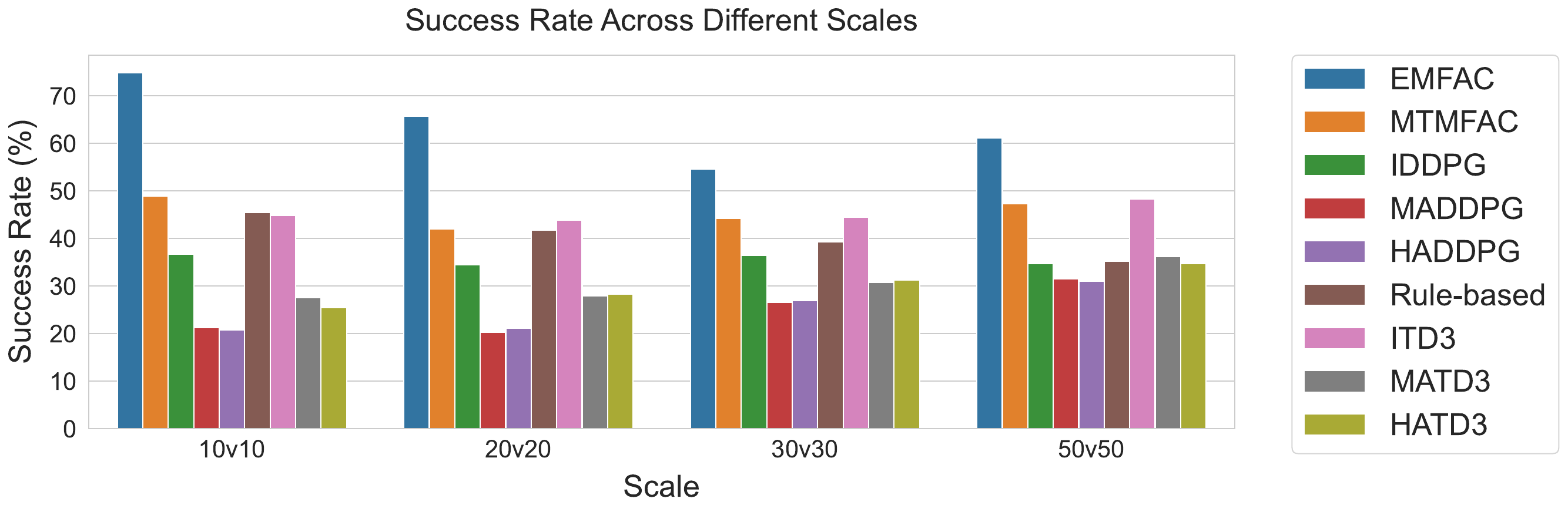}%
\label{main_success}}
\hfil
\subfloat[]{\includegraphics[width=1.0\columnwidth]{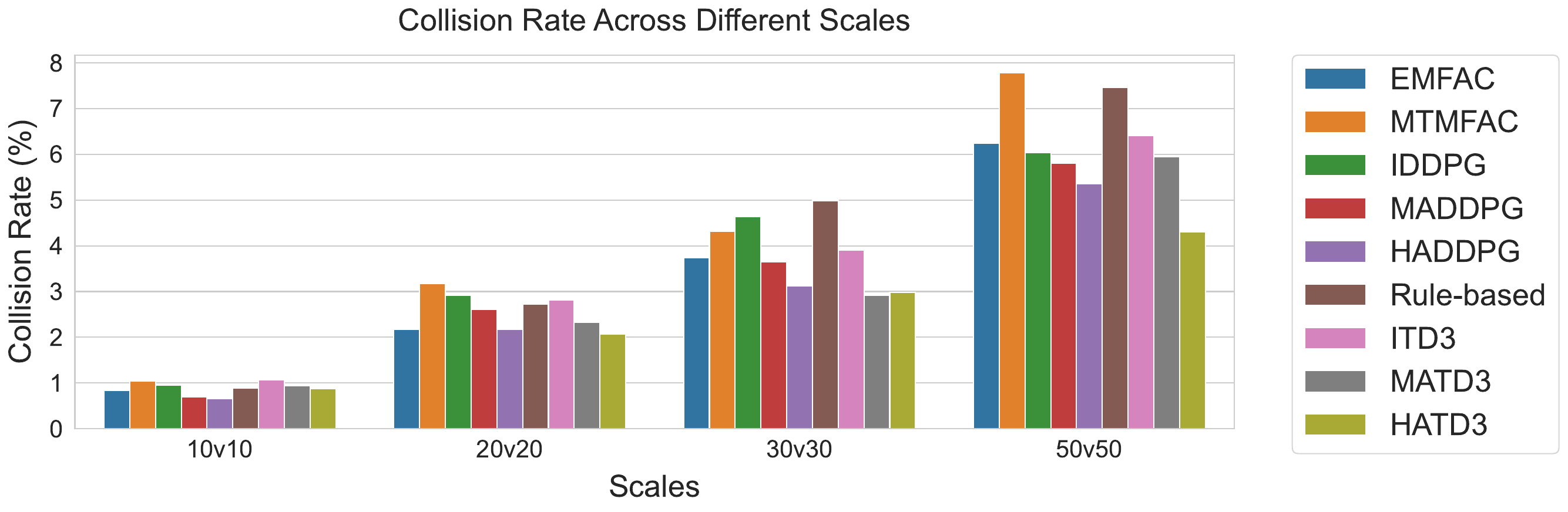}%
\label{main_coll}}
\caption{Comparison of Success and Collision Rates of Various Algorithms at Different Scales.}
\label{main_rate}
\end{figure*}

\subsubsection{High-level Action Visualization Analysis}
To visually demonstrate the high-level actions learned through representation learning, we conducted training on a 5v5 task involving two distinct types of motion dynamics, setting the high-level action dimension to 2. During training, we allowed converged agents to interact with the environment five times and visualized both the raw actions and the high-level actions, as shown in the Fig. \ref{high_action_plot}. In the raw actions, the distribution of actions across the two dynamics exhibits significant differences and a wide dispersion, with the variance for the two dimensions being 0.46 and 3.59, respectively. Such a broad and dispersed action space makes it difficult to effectively capture the influence of other agents using mean-field theory. In contrast, after abstracting the actions into high-level representations, the range of actions for both dynamics becomes much more compact and concentrated, with significantly smaller variances of 0.11 and 0.07. The more concentrated action space facilitates better learning by the Q-network. Furthermore, through representation learning, the heterogeneity of dynamics is effectively reduced, and the influence of other agents on the central agent is captured through higher-level mean-field action. This not only accelerates training but also demonstrates the rationality of our approach.
\begin{figure}[htp]
\centering
\includegraphics[width=6cm]{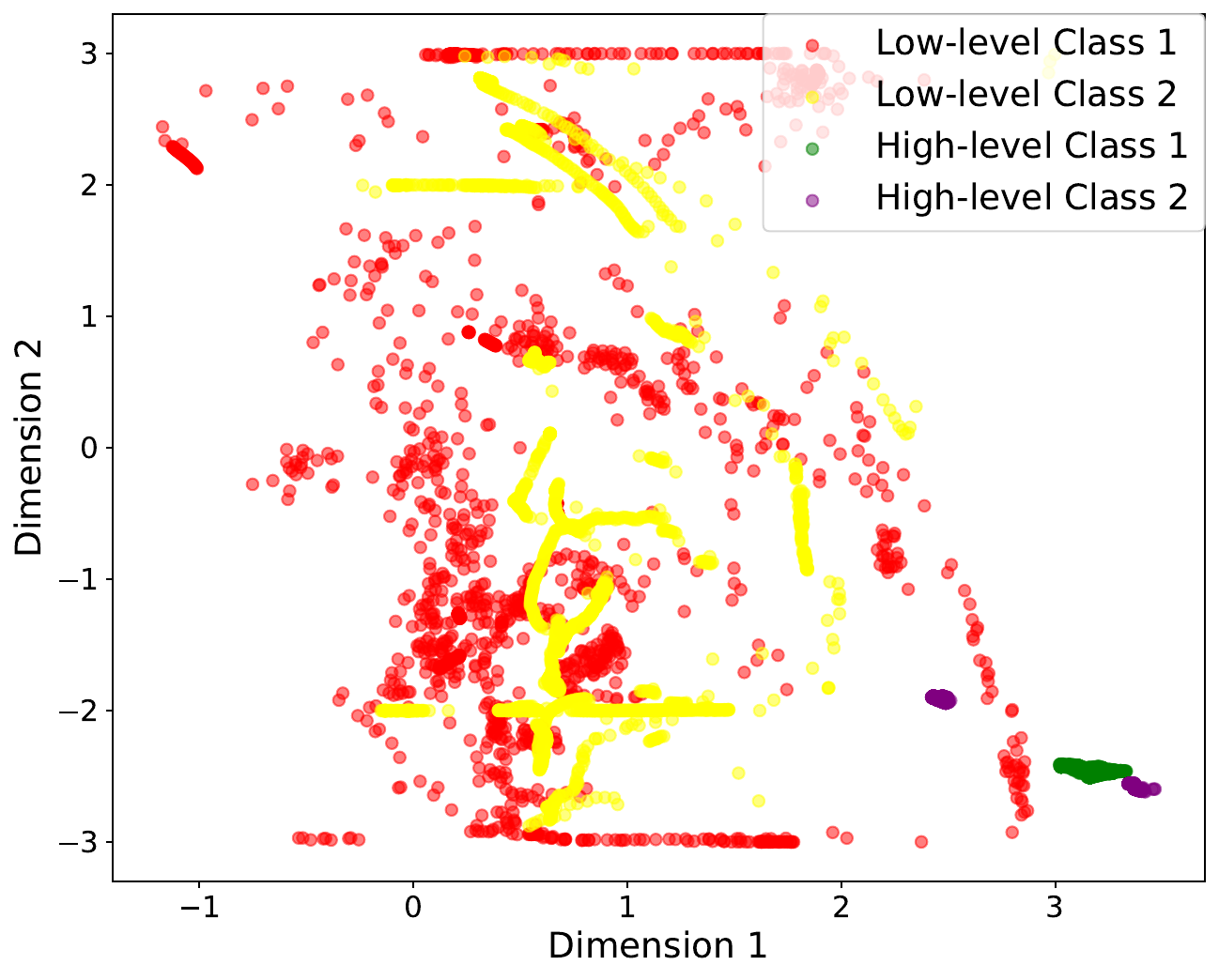}
\caption{High-level Action Visualizatoion}
\label{high_action_plot}
\end{figure}

\subsubsection{Analysis of Attention Mechanism Selection}
To provide a clearer and more intuitive demonstration of the rationale behind the attention mechanism’s selection of relevant agents, we conduct experiments in a 5v5 setting with an attention ratio $k=0.6$. Upon convergence, we extract an attention matrix for analysis, as shown in the Fig. \ref{att_analysis}. The influence of neighboring agents is primarily reflected in the obstacle avoidance task, meaning that the selected attention weights should align with this subtask. The visualization reveals that attention is primarily directed toward closer agents, as confirmed by the distance matrix, which highlights their proximity compared to others. Furthermore, the cosine similarity between the optimal breach direction and the bearing angles of surrounding agents serves as another crucial factor. A higher similarity suggests that these agents share a similar orientation and are more likely to be involved in future collisions, warranting greater attention. This observation is consistent with the learned attention patterns, confirming that agent-level attention should jointly consider both distance and relative orientation to enhance situational awareness and improve collision avoidance. These findings further validate the reasonability of our proposed method. 
\begin{figure}[htp]
\centering
\includegraphics[width=9.2cm]{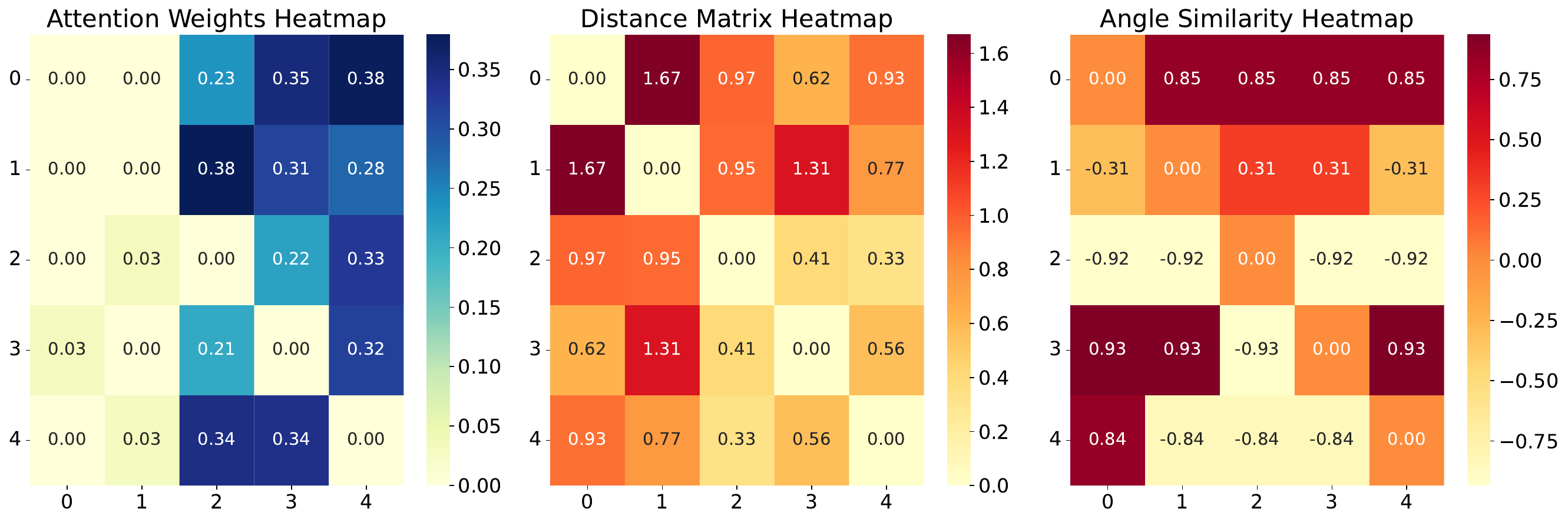}
\caption{Visualization of different matrices. From left to right: Attention matrix, Distance matrix, and Azimuthal Pre-similarity matrix.}
\label{att_analysis}
\end{figure}

\subsubsection{Flight Trajectory Visualization}
To effectively illustrate and analyze the learned policies of the agents, we conduct experiments in smaller-scale scenarios with 5v5 and 10v10 settings for clearer visualization, as shown in the Fig. \ref{fig:trajectory_comparison}. Due to the need to overcome complex dynamics, wind perturbations, and obstacle avoidance, the trajectories of the defender agents exhibit significant curvature. Nevertheless, they successfully reach their designated targets while avoiding collision.

\begin{figure}[!t]
\centering
\subfloat[]{\includegraphics[width=1.6in]{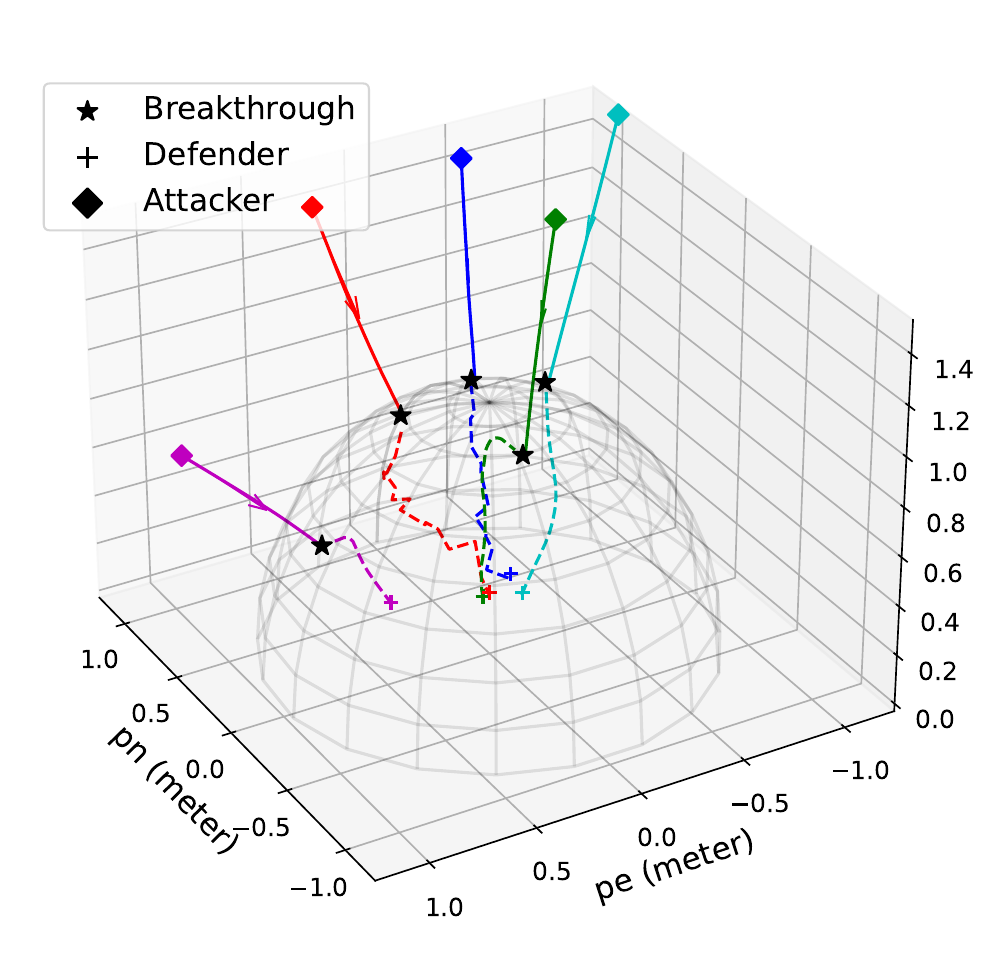}}
\hfil
\subfloat[]{\includegraphics[width=1.6in]{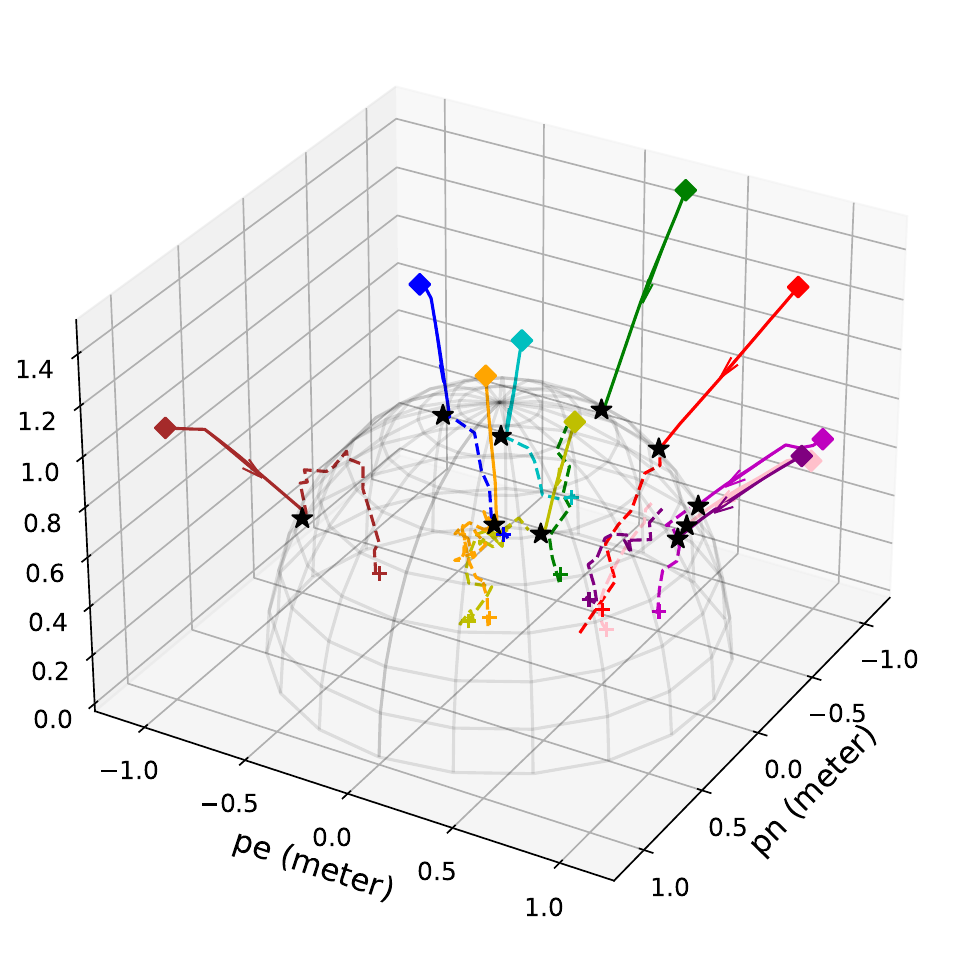}}
\caption{Comparison of 5v5 and 10v10 trajectory visualizations. (a) 5v5. (b) 10v10.}
\label{fig:trajectory_comparison}
\end{figure}

\subsubsection{Parameter Sensitivity Analysis}
To evaluate the impact of two key parameters in the EMFAC algorithm, we conducted a parameter sensitivity analysis using a 20v20 scenario. Specifically, we examined the effects of attention ratio $k$ and high-level action dimension on performance, the results are showed in Figure \ref{fig:explore}.

\begin{figure}[!t]
\centering
\subfloat[]{\includegraphics[width=1.6in]{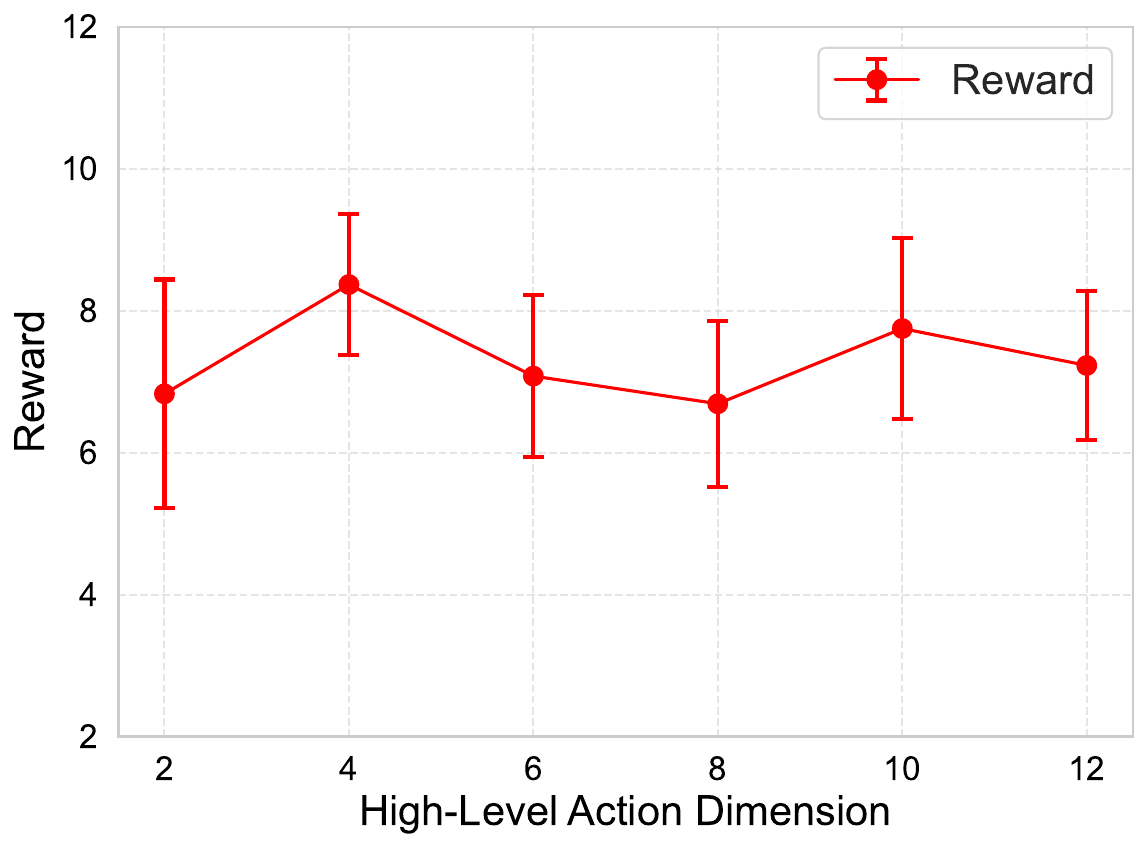}%
\label{high_action_dim}}
\hfil
\subfloat[]{\includegraphics[width=1.6in]{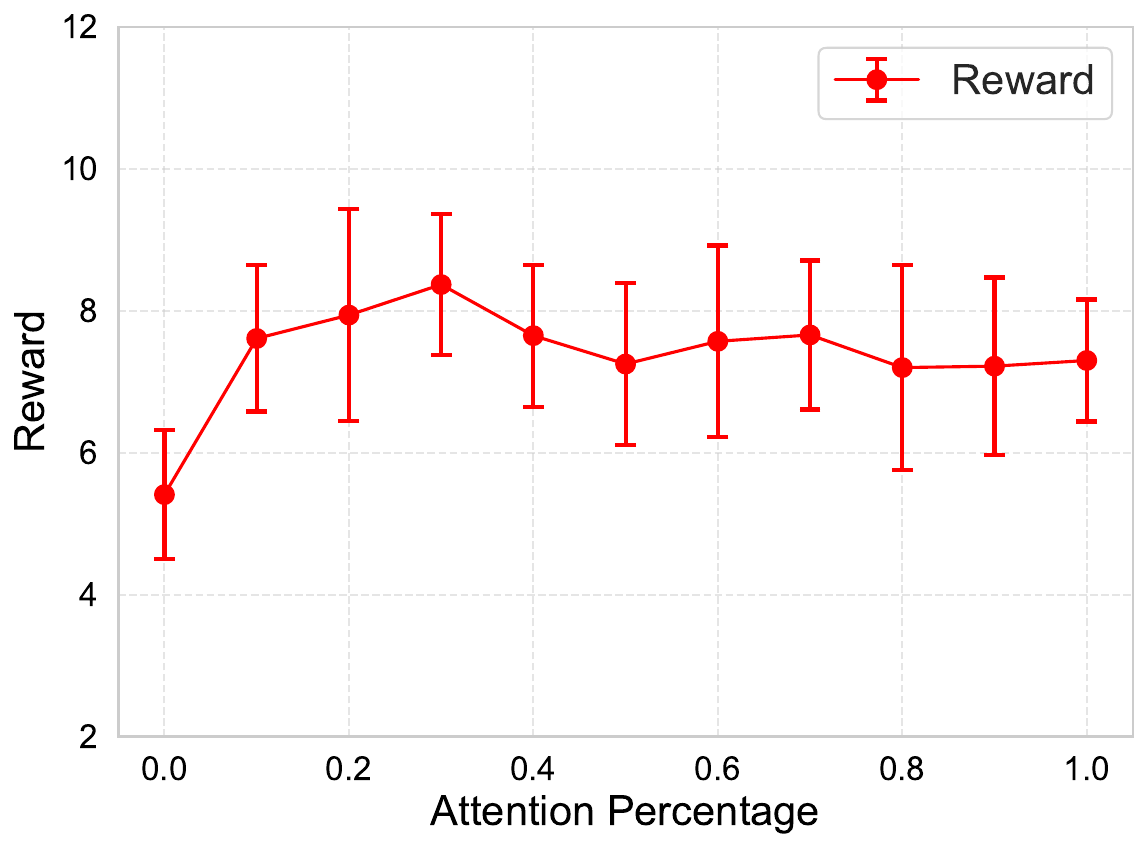}%
\label{att_per}}
\caption{Parameter Sensitivity Analysis Metrics: Converged reward, success rate, and collision rate. (a) Impact of high-level action dimension. (b) Impact of attention ratio.}
\label{fig:explore}
\end{figure}

\begin{itemize}
    \item \textbf{Analysis of the Impact of High-level Action Dimensions:} For the high-level action dimension, we fixed the attention ratio k = 0.3 and tested dimensions ranging from 2 to 12. The results, presented in Figure \ref{high_action_dim}, indicate that the performance remains relatively stable across different dimensions, demonstrating the robustness of the method to varying parameter settings.
    \item \textbf{Analysis of the Impact of Attention Ratio $k$:} For the attention ratio, we fixed the high-level action dimension at 4 and tested values from 0.0 to 1.0. As shown in Figure \ref{att_per}, a ratio of 0 leads to poor learning performance due to insufficient collision-related observations. Conversely, a extreme high ratio enlarges the observation space, making learning more difficult and slightly degrading performance. However, the performance remains stable across a broad range of moderate parameter values, highlighting the robustness of the method to parameter variations.
\end{itemize}

\subsubsection{Ablation Experiment}
To validate the effectiveness of our proposed method, we conduct an ablation study in a 20v20 setting, focusing on the agent-level attention module and the embedded mean-field aggregation module. We compare several variants:

\begin{itemize}
    \item \textbf{w/o Att-S}: Removing agent-level attention on states.
    \item \textbf{w/o Att-A}: Removing agent-level attention on embedded actions.
    \item \textbf{w/o Att-AS}: Removing the agent-level attention module entirely.
    \item \textbf{w/o EMF}: Removing embedded mean-field action aggregation, using only the agent-level attention module. We use MTMFAC, augmented with our attention module.
\end{itemize}

Experimental results in Table \ref{abla_exp} show that removing any module degrades performance. Eliminating attention on states causes significant drop, as the critics are overwhelmed with information, making it hard to extract useful insights. Removing attention on embedded high-level actions also results in a noticeable performance decline, indicating the attention mechanism’s role in capturing agent interactions and enhancing mean-field aggregation. The most significant performance decline occurs when the attention module is fully removed, highlighting its critical importance. Notably, when embedded mean-field action aggregation is removed but attention is retained, performance exceeds that of the original MTMFAC algorithm, demonstrating that the agent-level attention module can enhance various algorithms as a plug-and-play module. However, it still underperforms compared to EMFAC, which benefits from the learned embedded high-level actions, improving both convergence speed and final performance.
\begin{table}[t]
    \centering
    \caption{Ablation Experiment.}
    \label{tab:results}
    \setlength{\tabcolsep}{4pt} 
    \renewcommand{\arraystretch}{1.2} 
    \begin{tabular}{lccc}
        \toprule
        Algorithm & Converged Reward & Collision Rate (\%) & Success Rate (\%) \\
        \midrule
        EMFAC             & \textbf{8.37 $\pm$ 0.99}  & \textbf{2.17 $\pm$ 1.20}  & \textbf{65.75 $\pm$ 6.73}  \\
        w/o Att-S            & 4.54 $\pm$ 0.85  & 3.15 $\pm$ 1.29  & $45.13 \pm 4.68$  \\
        w/o Att-A            & 7.80 $\pm$ 1.67 & 2.22 $\pm$ 1.09 & 62.50 $\pm$ 9.90 \\
        w/o Att-AS           & 4.34 $\pm$ 1.06 & 3.12 $\pm$ 1.76 & 42.88 $\pm$ 6.65 \\
        w/o EMF   & 6.88 $\pm$ 2.09 & 2.74 $\pm$ 1.39 & 57.75 $\pm$ 12.30 \\
        MTMFAC   & 4.16 $\pm$ 1.47  & 3.17 $\pm$ 1.44 & 42.08 $\pm$ 8.42 \\
        \bottomrule
    \end{tabular}
    \label{abla_exp}
\end{table}

\subsubsection{Algorithm Runtime Analysis}
To thoroughly evaluate the computational efficiency of the proposed algorithm, we also analyze its overhead. Since all compared methods utilize the same actor network, their inference time remains identical, with the primary difference arising in the training of the Q-network. To capture this, we measure the total execution time of each algorithm, as shown in Fig. \ref{runtime}. The experiments are conducted on an NVIDIA RTX 3090 GPU and an Intel Core i9-12900K CPU. As the problem scale increases, the computational cost of all algorithms grows accordingly. However, the additional two modules of EMFAC require only a short training period before convergence, after which their parameters are frozen for inference. As a result, EMFAC incurs only a modest increase in computational overhead, while delivering significantly enhanced performance.
\begin{figure}[htp]
\centering
\includegraphics[width=5cm, height=3cm]{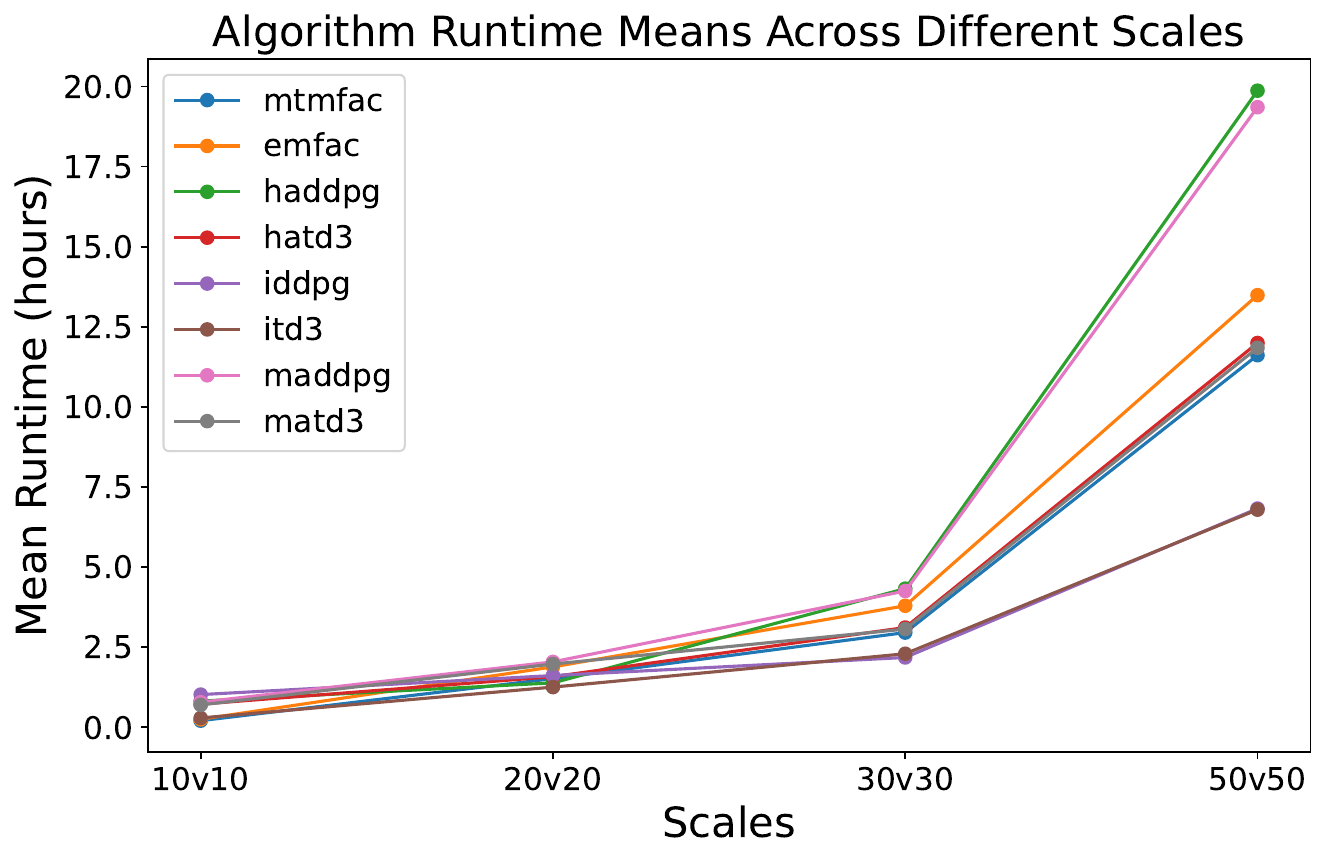}
\caption{Comparison of Algorithm Runtimes across Different Scales of Perimeter-defense Game.}
\label{runtime}
\end{figure}

\subsection{Real-World Experiments}
\begin{figure}[t]
\centering
\includegraphics[width=0.85\linewidth, height=4cm]{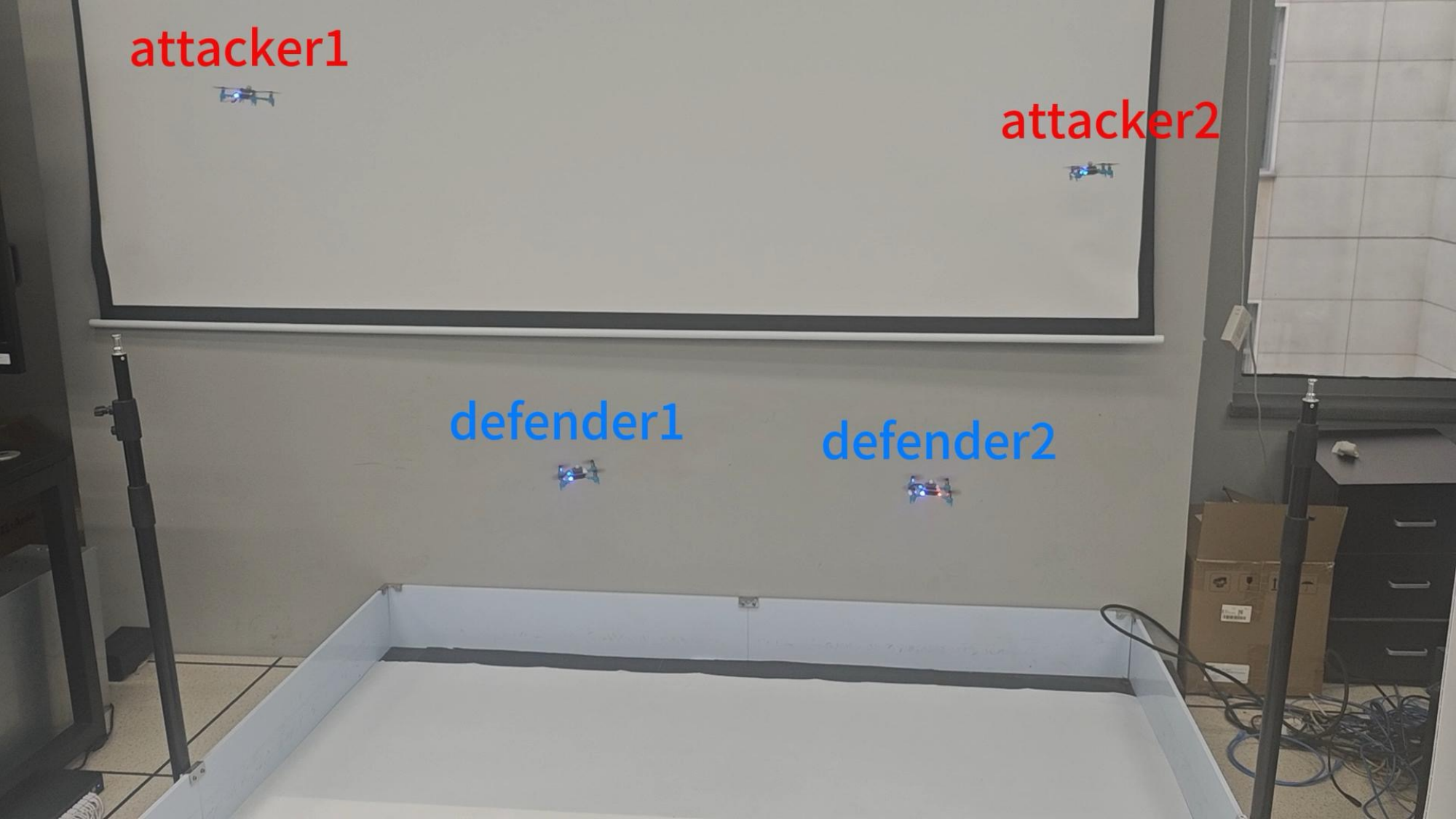}
\caption{Experimental setup for real-world validation using Crazyflie UAVs in 2v2 scenarios.}
\label{fig:real_experiment}
\end{figure}

To further validate the effectiveness of EMFAC, we conducted real-world experiments utilizing Crazyflie UAVs to simulate missile trajectories within a controlled indoor test environment, as depicted in Fig. \ref{fig:real_experiment}. The experimental arena, measuring 2 meters by 2 meters, was devoid of any obstacles to ensure a clear and unobstructed space for the UAVs. Small-scale scenarios were systematically executed in 2v2 configuration to assess the algorithm's performance under varying conditions. To accurately track the UAVs' movements, a centralized motion capture system was employed to obtain precise coordinates. Access to the experimental video \footnote{see supplementary materials.}, which captures the essence of these engagements. Table \ref{real_exp} presents the success rates and accuracy of the top five algorithms. EMFAC consistently achieves the highest accuracy while maintaining a lower collision rate, highlighting its effectiveness in real-world deployment.

\begin{table}[htbp]
\centering
\caption{Performance Comparison of Top-5 Algorithms in 2v2 Real-World Experiments}
\begin{tabular}{lccc}
\hline
\textbf{Algorithm} & \textbf{Collision Rate (\%)} & \textbf{Success Rate (\%)} \\
\hline
EMFAC & 0.22$\pm$0.39 & 98.78$\pm$2.90 \\
MTMFAC & 0.47$\pm$0.57 & 97.99$\pm$4.18 \\
IDDPG & 0.18$\pm$0.25 & 93.19$\pm$4.54 \\
ITD3 & 0.22$\pm$0.37 & 98.67$\pm$2.89 \\
MATD3 & 0.24$\pm$0.41 & 87.38$\pm$10.03 \\
\hline
\label{real_exp}
\end{tabular}
\end{table}

\section{Conclusion}
\label{sec:conclusion}
This paper introduces large-scale heterogeneous three-dimensional perimeter-defense game and utilizes motion dynamics, taking into account real-world factors such as wind perturbations. Based on the analysis of one-on-one case, Nash equilibrium strategies are derived and verified through simulation. For the many-versus-many heterogeneous scenarios, we propose the EMFAC method, which uses representation learning to learn high-level actions and employs reward learning to focus agent-level attention, accelerating the learning of agents and efficiently utilizing information in large-scale state-action spaces. The effectiveness of our method is validated through simulations and real-world experiments of varying scales, which can speed up training and significantly enhance convergence performance. Future work may include research on scenarios with an imbalance in the number of agents and varying task priority levels.

\section*{Acknowledgments}
This work was supported by the National Science and Technology Major Project (No. 2022ZD0117402)

{\appendix[PARAMETER SETTINGS]
For a fair comparison, all algorithms are implemented based on the HARL\cite{ref25, ref26} framework and share identical hyperparameter settings. The detailed parameter configurations are provided in Table \ref{param_setting}.

\begin{table}
\centering
\caption{Hyperparameters used in EMFAC.}
\begin{tabular}{cc}
\hline
Hyperparameters             & value                      \\ \hline
buffer size & 100000                         \\
batch size          & 256                      \\
gamma                       & 0.99                       \\
critic learning rate                  & 0.001                \\
actor learning rate                  & 0.0005                          \\
high action dim                 & 4 \\
attention rate             & 0.3    \\
optimizer                   & Adam                       \\
  expl noise           & 0.1                       \\
policy noise                & 0.2                          \\
share parameters      & False                 \\
activation func    & ReLU                       \\
hidden sizes   & [128, 128]                       \\ \hline \label{tablevalue}
\label{param_setting}
\end{tabular}
\end{table}

\section{Simple References}

\section{Biography Section}
\begin{IEEEbiography}[{\includegraphics[width=1in,height=1.25in,clip,keepaspectratio]{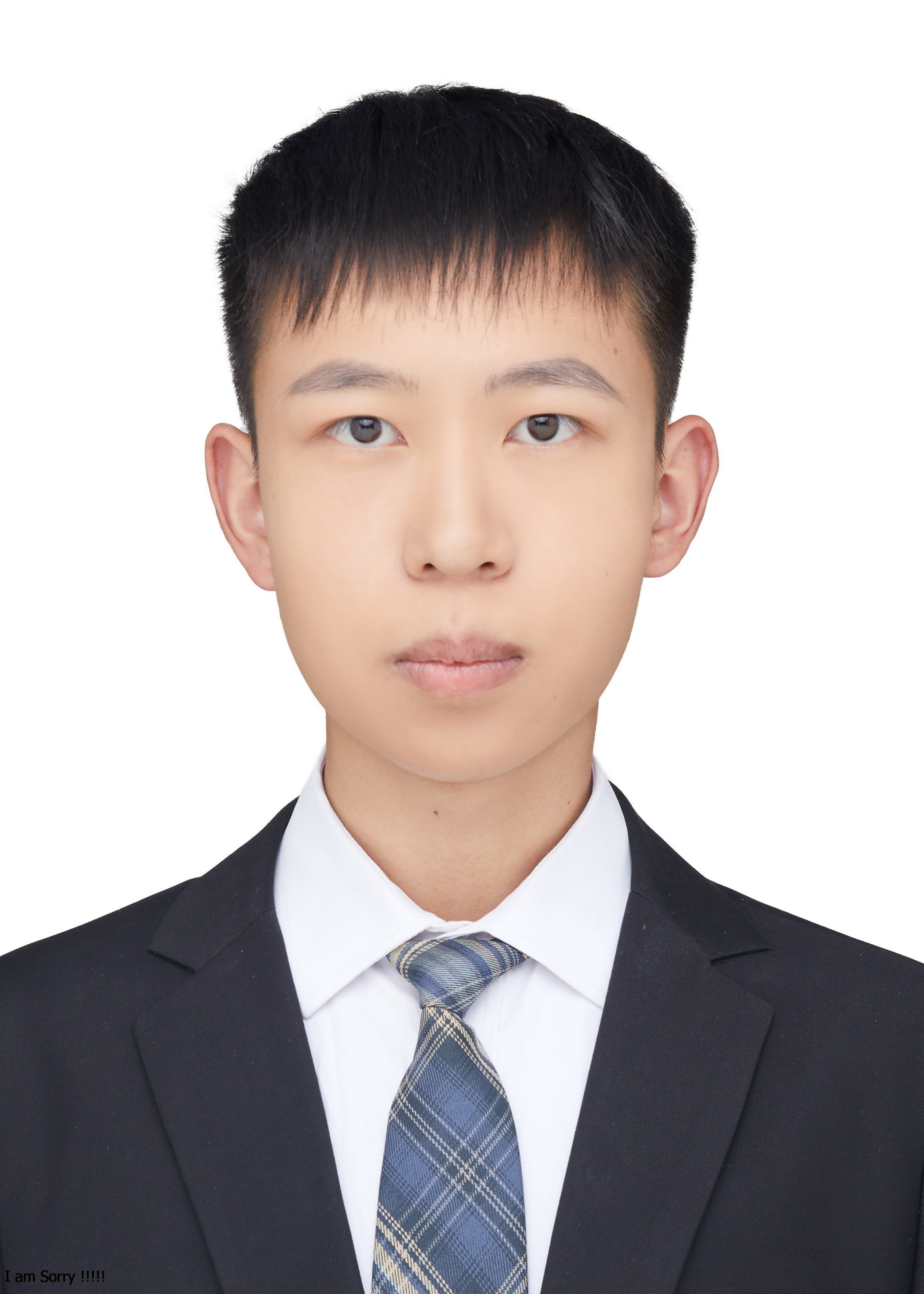}}]{Li Wang}
 received the B.S. degree from the School of Artificial Intelligence, Beihang University, in 2024. He is currently pursuing the Ph.D. degree at the same institution, under the supervision of Prof. Wenjun Wu. His research interests include multi-agent reinforcement learning and large language models.
\end{IEEEbiography}

\begin{IEEEbiography}
[{\includegraphics[width=1in,height=1.25in,clip,keepaspectratio]{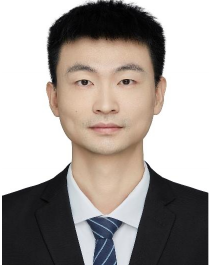}}]{Xin Yu} 
is a Ph.D. student at the School of Computer Science and Engineering at Beihang University, supervised by Prof. Wenjun Wu. His research focuses on MARL, aiming to enhance sample efficiency by integrating domain knowledge.
\end{IEEEbiography}

\begin{IEEEbiography}[{\includegraphics[width=1in,height=1.25in,clip,keepaspectratio]{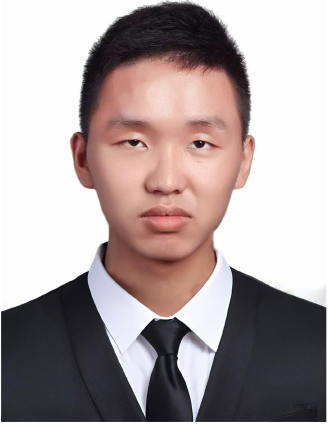}}]{Lv Xuxin}
 received the B.S. degree from the School of Artificial Intelligence, Beihang University, in 2024. He is currently pursuing the M.S. degree at the same institution, under the supervision of Prof. Wenjun Wu. His research interests include swarm intelligence and robotics.
\end{IEEEbiography}

\begin{IEEEbiography}[{\includegraphics[width=1in,height=1.25in,clip,keepaspectratio]{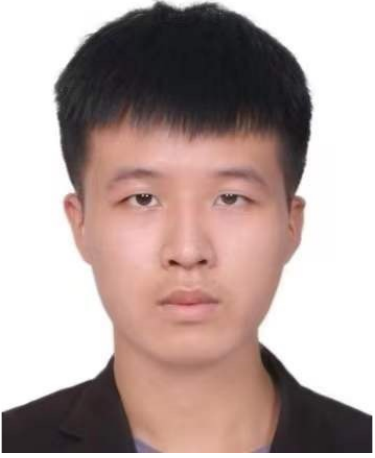}}]{Gangzheng Ai}
received the B.S. degree from School of Aeronautics and Astronautics, Sun Yat-Sen University, Guangzhou, China, in 2022. He is currently pursuing the M.S. degree with the Image Processing Center, School of Astronautics, Beihang University. His research interests include computer vision, Physics-Informed Neural Networks, 3D reconstruction, and deep-space exploration.
\end{IEEEbiography}

\begin{IEEEbiography}
[{\includegraphics[width=1in,height=1.25in,clip,keepaspectratio]{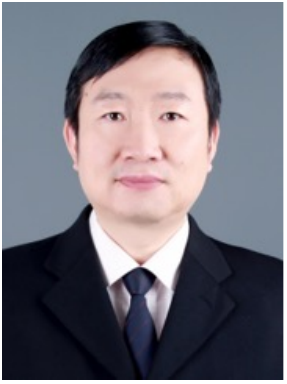}}]{Wenjun Wu}
 is a professor at the School of Artificial Intelligence, Beihang University. His research interests encompass swarm intelligence, large-scale online education, multi-agent reinforcement learning, and AI for science.
\end{IEEEbiography}

\vfill
\end{document}